\newtheorem{Proposition}{Proposition}[section]
\newtheorem{theorem}{Theorem}[section]
\newtheorem{Lemma}{Lemma}[section]
\newtheorem{Definition}{Definition}[section]
\newtheorem{corollary}{Corollary}[section]
\newtheorem*{namedthm}{\namedthmname}
\newcounter{namedthm}
\title{A Novel Approach in Solving Stochastic Generalized Linear Regression via Nonconvex Programming}
\author{
  ~~Vu Duc Anh$^{1}$\thanks{~~Co-first authors, contributed equally} \space,\space  Tran Anh Tuan$^{2*}$,~~ Tran Ngoc Thang$^{2}$,~~ Nguyen Thi Ngoc Anh$^{2}$\thanks{~~Corresponding author}\\
\textsuperscript{1}Nanyang Technological University, Singapore \\
\textsuperscript{2}Faculty of Mathematics and Informatics, Hanoi University of Science and Technology, Hanoi, Vietnam \\
    \texttt{\small 
    ducanh001@e.ntu.edu.sg,
    tuan.ta222171m@sis.hust.edu.vn,}  
    \\
    \texttt{\small
    thang.tranngoc@hust.edu.vn,
    anh.nguyenthingoc@hust.edu.vn} \\}
\begin{document}
\maketitle

\begin{abstract}
Generalized linear regressions, such as logistic regressions or Poisson regressions, are long-studied regression analysis approaches, and their applications are widely employed in various classification problems. Our study considers a stochastic generalized linear regression model as a stochastic problem with chance constraints and tackles it using nonconvex programming techniques. Clustering techniques and quantile estimation are also used to estimate random data's mean and variance-covariance matrix. Metrics for measuring the performance of logistic regression are used to assess the model's efficacy, including the F1 score, precision score, and recall score. The results of the proposed algorithm were over 1 to 2 percent better than the ordinary logistic regression model on the same dataset with the above assessment criteria.
\end{abstract}

\keywords{
nonconvex programming, stochastic regression, generalized linear regression, clustering, quantile
}

\section{Introduction}


In the realm of regression analysis, generalized linear models (GLMs) have carved out a significant niche, particularly in solving classification challenges. This paper delves into an advanced application of GLMs, focusing on the stochastic generalized linear regression model. Unlike traditional methods, this study approaches the model as a stochastic problem, integrated with chance constraints. Our methodology is rooted in the use of nonconvex programming techniques, a departure from conventional convex approaches.

We integrate clustering techniques and quantile estimation to further enhance the model's robustness. These methods are instrumental in accurately estimating the mean and variance-covariance matrix of random data, a crucial step in analyzing stochastic problems. Our study applies these techniques and innovatively combines them with the stochastic generalized linear regression model.

A critical aspect of our research is evaluating the model's performance. In this context, we employ various metrics specifically tailored for logistic regression. These include the F1 score, precision score, and recall score. These metrics comprehensively evaluate the model's effectiveness in real-world scenarios.

Logistic regression stands out as a crucial analytical method, as demonstrated by \cite{Bishop2007}, who illustrated the construction and updating of a logistic model for tackling classification issues. \cite{Jurafsky} applied logistic regression in the context of classifying multiple image classes, serving as a classification algorithm for predicting labels within a discrete set of classes. Its widespread applications include identifying email spam (as investigated by \cite{DADA2019e01802}), detecting internet fraud (explored by \cite{Sharifi}), and forecasting short-term electricity load  (explored by \cite{9242910}).

To enhance the effectiveness of logistic regression in handling large datasets with speed and precision, robust tools have been employed. Various convex programming algorithms play a role in finding optimal solutions, as discussed by \cite{LOBO1998193}, who also delved into the theoretical foundations of optimization problem-solving methods. Challenges arise in maintaining the consistency of problem constraints in the face of stochastic events affecting attribution data. To address this, \cite{Charnes1959}, \cite{Charnes1963}, and \cite{9778084}  introduced chance-constrained programming specifically designed to overcome such difficulties. In the pursuit of solving optimization problems related to logistic regression, the CVXOPT package by L. Vandenberghe for convex optimization problems has proven instrumental.

The structure of our research paper is methodically organized as follows: In the second section of our investigation, we provide some general preliminary findings. Section 3 contains the specifics of our suggested technique.
Within Section 4, we provide the findings, and within Section 5, we talk about the advantages of our approach. In the sixth section, we provide our concluding observations as well as instructions for further research.

\section{Preliminaries}


The general loss function of the logistic regression model which is proposed by \cite{Jurafsky}:\\ 

\begin{align}\label{ojt1} - \log\bigg(P(\mathbf{y}|\mathbf{X}, \mathbf{w})\bigg)= - &\sum_{i=1}^{\mathbf{N}}\bigg(y_{i}\log(\widehat{y_{i}})+ (1-y_{i})\log(1-\widehat{y_i})\bigg)
\end{align}
where
$\mathbf{X}=(x^{(1)},x^{(2)},\dots,x^{(\mathbf{N})}), x^{(i)} \in \mathbb{R}^\mathbf{d}$
$\mathbf{y}=(y_1,y_2,\dots,y_N), y_i \in \{0,1\}$, $\mathbf{w}=(w_1,w_2,\dots,w_d), \mathbf{w} \in \mathbb{R}^\mathbf{d}$ $\widehat{y}_i=\theta(\mathbf{w}^Tx^{(i)}), \theta $ \textit{ is activation function}
$\mathbf{d}$ is the number of variables, and
$\mathbf{N}$ is the number of observations.\\\\
\indent Loss function \eqref{ojt1} represent the \textit{cross-entropy loss} between $y_i$ and $\widehat{y_i}$. Furthermore, \cite{Jurafsky} showed also the loss function of the logistic sigmoid regression model which is modelized by \textit{sigmoid function}
\begin{align*}\tag{2}\label{obj2}
    J(\mathbf{w})=&-\sum_{i=1}^{\mathbf{N}}\bigg(y_{i}\log\left(\sigma\left(-\mathbf{w}^{T}x^{(i)}\right)\right)+
    (1-y_{i})\log\left(1-\sigma\left(-\mathbf{w}^{T}x^{(i)}\right)\right)\bigg), 
\end{align*}
where
$\mathbf{X}=(x^{(1)},x^{(2)},\dots,x^{(\mathbf{N})}), x^{(i)} \in \mathbb{R}^\mathbf{d}$
$\mathbf{y}=(y_1,y_2,\dots,y_N), y_i \in \{0,1\}$ $\mathbf{w}=(w_1,w_2,\dots,w_d), \mathbf{w} \in \mathbb{R}^\mathbf{d}$ $\sigma\left(-\mathbf{w}^{T}x^{(i)}\right)=\dfrac{1}{1+e^{(-\mathbf{w}^{T}x^{(i)})}}, $\textit{ is sigmoid function}
$\mathbf{d}$ is the number of variables
$\mathbf{N}$ is the number of observations.
\begin{figure}[h!]
    \centering
    \includegraphics[scale=0.6]{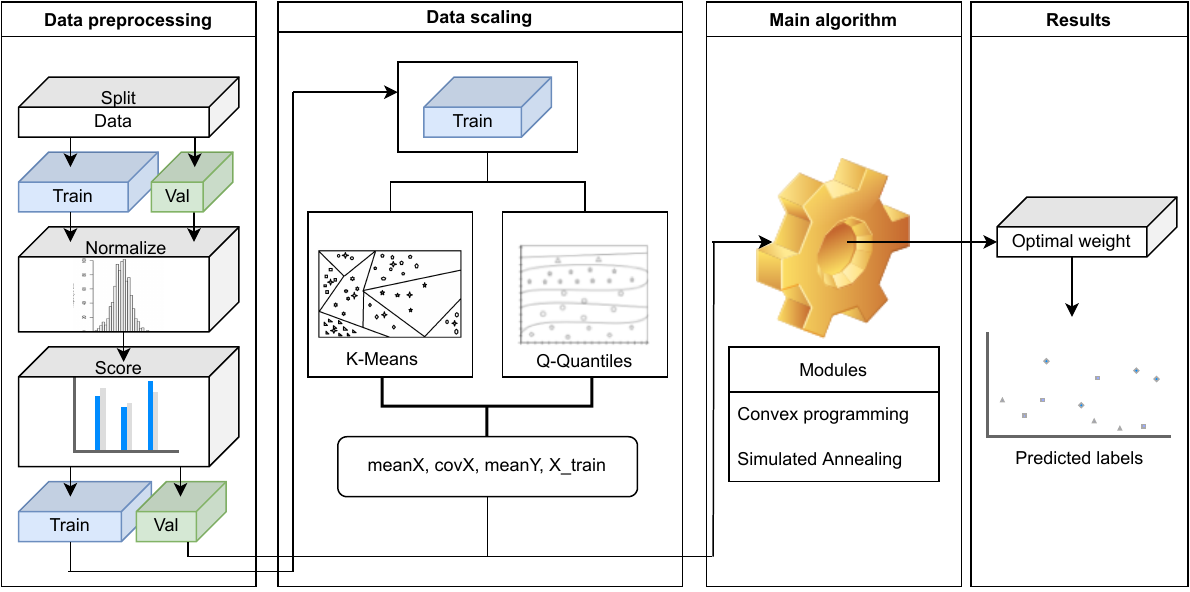}
    \caption{The general architecture of proposed model.}
    \label{model}
\end{figure}
\begin{Definition}\label{df1}
    (\cite{Fenchel1953ConvexCS}). Consider a convex set $\Omega\subseteq \mathbb{R^n}$, a continuously function $J(w)$ is said to be convex on $\Omega$ if, with $\forall w^{(1)},w^{(2)}\in \Omega \text{ and } \alpha \in \left[0,1\right]$, the following inequality holds true:
    \begin{align*}
        J\left(\alpha w^{(1)}+(1-\alpha)w^{(2)}\right)\le \alpha J(w^{(1)})+(1-\alpha)J(w^{(2)}),
    \end{align*}
then, $-J(w)$ is a concave function.
\end{Definition}
\begin{Definition}
    (\cite{avriel1988generalized}). A continuously function $J:\mathbb{R}^{n}\rightarrow\mathbb{R}$ is said to be semi-strictly quasi-convex on a convex set $\Omega$ if for every pair of distinct $w^{(1)},w^{(2)} \in \Omega, \lambda \in (0,1)$, we have
\begin{align*}
    \hspace{-0.5cm}J\left(\lambda w^{(1)}+(1-\lambda)w^{(2)}\right)\ge J(w^{(1)}) \Rightarrow J(w^{(2)})\ge J(w^{(1)}),
\end{align*}
then, $-J(w)$ is a semi-strictly quasi-concave function.
\end{Definition}
\begin{Definition}
    (\cite{avriel1988generalized}). With $\alpha\in\mathbb{R}$, a convex set $\Omega\subseteq\mathbb{R^n}$, let
    \begin{align*}
        \hspace{-1.cm}U^{(\alpha)}(J)=\{w\in\Omega|J(w)\ge\alpha\}\text{ is upper-level set of function J},\\
        \hspace{-1.cm}L^{(\alpha)}(J)=\{w\in\Omega|J(w)\le\alpha\}\text{ is lower-level set of function J}.
    \end{align*}
\end{Definition}
\begin{corollary}\label{c1}
(\cite{Fenchel1953ConvexCS}). Let $J$ be a convex function on $\Omega\subset\mathbb{R}^n$. The lower-level set of function $J$ is a convex set with $\forall\alpha\in\mathbb{R}$.
\end{corollary}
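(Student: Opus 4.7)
The plan is to prove the corollary directly from the definitions: take two arbitrary points in the lower-level set, form an arbitrary convex combination, and check that the combination still lies in the lower-level set. Since $L^{(\alpha)}(J)$ is defined by the single inequality $J(w)\le\alpha$, the whole argument reduces to a one-line application of \autoref{df1}.

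In more detail, I would first dispose of the trivial case in which $L^{(\alpha)}(J)=\emptyset$, since the empty set is vacuously convex. Otherwise, I would fix $w^{(1)},w^{(2)}\in L^{(\alpha)}(J)$ and $\lambda\in[0,1]$, and observe that $\lambda w^{(1)}+(1-\lambda)w^{(2)}\in\Omega$ because $\Omega$ is convex. Using the membership of $w^{(1)},w^{(2)}$ in the lower-level set, I have $J(w^{(1)})\le\alpha$ and $J(w^{(2)})\le\alpha$. Applying the convexity inequality from \autoref{df1} gives
\begin{align*}
J\bigl(\lambda w^{(1)}+(1-\lambda)w^{(2)}\bigr)\le \lambda J(w^{(1)})+(1-\lambda)J(w^{(2)})\le \lambda\alpha+(1-\lambda)\alpha=\alpha,
\end{align*}
so the convex combination lies in $L^{(\alpha)}(J)$, which is therefore convex. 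Since $\alpha\in\mathbb{R}$ was arbitrary, the conclusion follows.

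There is essentially no obstacle here: the result is an immediate corollary of the definition of convexity, and the only subtlety worth flagging is making sure to handle the empty lower-level set separately and to invoke convexity of $\Omega$ to ensure the convex combination stays in the domain before evaluating $J$ at it.
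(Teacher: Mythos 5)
Your proof is correct and is the standard argument; the paper itself states this corollary as a cited result from Fenchel without supplying a proof, and your direct verification via the convexity inequality of \autoref{df1} is exactly the canonical way to establish it. The extra care about the empty lower-level set and about the convex combination remaining in $\Omega$ is appropriate and complete.
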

\begin{theorem}\label{tr1}
(\cite{DinhThe2005}). A continuous function $J$ is semi-strictly quasi-convex on a convex set $\Omega$ if and only if lower-level set $L^{(\alpha)}(J)$ is convex set with $\forall \alpha\in\mathbb{R}$.
\end{theorem}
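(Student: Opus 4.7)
The plan is to prove the two implications separately. The forward direction rests on SSQC directly when the two endpoint values differ, with continuity needed only to handle the case of equal endpoint values. The reverse direction extracts a weak inequality from convexity of the appropriate lower-level set and then tries to upgrade it, using continuity, to the strict form demanded by SSQC.

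For the $(\Rightarrow)$ direction, I fix $\alpha \in \mathbb{R}$, take $w^{(1)}, w^{(2)} \in L^{(\alpha)}(J)$ and $\lambda \in (0,1)$, and set $z = \lambda w^{(1)} + (1-\lambda) w^{(2)}$. If $J(w^{(1)}) \neq J(w^{(2)})$, assume WLOG $J(w^{(1)}) > J(w^{(2)})$; the contrapositive of the SSQC implication applied to the ordered pair $(w^{(2)}, w^{(1)})$ then gives $J(z) < J(w^{(1)}) \le \alpha$ immediately. When $J(w^{(1)}) = J(w^{(2)}) = \beta \le \alpha$, I suppose for contradiction that $J(z) > \alpha$, apply the intermediate value theorem to the continuous function $t \mapsto J((1-t)w^{(1)} + tz)$ on $[0,1]$ to produce $\tilde z = (1-\mu) w^{(1)} + \mu z$ with $\beta < J(\tilde z) < J(z)$, and then note by a short substitution that $z$ is a strict convex combination $\gamma \tilde z + (1-\gamma) w^{(2)}$ with $\gamma \in (0,1)$. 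Since $J(z) \ge J(\tilde z)$, SSQC applied to the pair $(\tilde z, w^{(2)})$ forces $J(w^{(2)}) \ge J(\tilde z) > \beta$, contradicting $J(w^{(2)}) = \beta$.

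For the $(\Leftarrow)$ direction, assume every lower-level set of $J$ is convex and that $J(z) \ge J(w^{(1)})$ at $z = \lambda w^{(1)} + (1-\lambda) w^{(2)}$, $\lambda \in (0,1)$. Suppose for contradiction that $J(w^{(2)}) < J(w^{(1)})$. Then both $w^{(1)}$ and $w^{(2)}$ lie in $L^{(J(w^{(1)}))}(J)$, and its convexity forces $J(z) \le J(w^{(1)})$, hence $J(z) = J(w^{(1)})$. Using continuity together with the strict inequality $J(w^{(2)}) < J(w^{(1)})$, I would slide a short distance from $z$ toward $w^{(2)}$ to a point $z'$ with $J(z') < J(w^{(1)})$, rewrite $z$ as a strict convex combination of $w^{(1)}$ and $z'$, and then combine convexity of a carefully chosen lower-level set with a small continuity perturbation of $w^{(1)}$ to promote the resulting weak inequality to the strict form $J(z) < J(w^{(1)})$, contradicting the equality just established.

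The central obstacle in both directions is the borderline equality case, namely $J(w^{(1)}) = J(w^{(2)})$ in the forward direction and $J(z) = J(w^{(1)})$ in the reverse, where SSQC is vacuous and convexity of lower-level sets produces only a non-strict inequality. Continuity, invoked through the intermediate value theorem on short subsegments, is the lever that breaks the tie: by perturbing one endpoint infinitesimally along a segment one reduces the degenerate equality case to a strict-inequality configuration where either SSQC or convexity of a lower-level set bites directly.
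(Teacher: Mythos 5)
The paper does not actually prove this statement---it is quoted from \cite{DinhThe2005} and used as a black box---so there is no in-paper argument to compare yours against. Your forward direction is correct and is essentially the standard Karamardian-type argument: when $J(w^{(1)})\neq J(w^{(2)})$ the contrapositive of the defining implication applies directly, and the equal-values case is correctly resolved by an intermediate-value-theorem point $\tilde z$ on the segment from $w^{(1)}$ to $z$ followed by a second application of the definition to the pair $(\tilde z, w^{(2)})$. That half is sound, and it is the only half the paper uses (in the lemma on the constraint set of problem $(P_4)$, semistrict quasiconvexity of $G_i^{(1)},G_i^{(2)}$ is converted into convexity of their lower-level sets).

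The reverse direction, however, has a genuine and unfixable gap, because that implication is false as stated: convexity of all lower-level sets of a continuous function characterizes quasiconvexity, i.e. $J(\lambda w^{(1)}+(1-\lambda)w^{(2)})\le\max\{J(w^{(1)}),J(w^{(2)})\}$, not semistrict quasiconvexity. A counterexample on $\Omega=\mathbb{R}$: let $J(w)=0$ for $w\le 0$, $J(w)=w$ for $0\le w\le 1$, and $J(w)=1$ for $w\ge 1$. This $J$ is continuous and nondecreasing, so every lower-level set is an interval and hence convex; yet taking $w^{(1)}=4$, $w^{(2)}=0$, $\lambda=\tfrac{1}{2}$ gives $z=2$ with $J(z)=1\ge J(w^{(1)})$ while $J(w^{(2)})=0<J(w^{(1)})$, violating the defining implication. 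Your argument correctly reaches the configuration $J(z)=J(w^{(1)})$ with $J(w^{(2)})<J(w^{(1)})$, but the final step---``combine convexity of a carefully chosen lower-level set with a small continuity perturbation of $w^{(1)}$ to promote the weak inequality to a strict one''---cannot be carried out: any level $c<J(w^{(1)})$ yields a set $L^{(c)}(J)$ containing neither $w^{(1)}$ nor, in the example above, any point near it, so no convexity statement is available that forces $J(z)<J(w^{(1)})$. The configuration you are trying to refute is simply consistent with the hypotheses. The statement becomes true if ``semi-strictly quasi-convex'' is replaced by ``quasi-convex'' in the equivalence, or if only the forward implication is retained, which is all the paper needs.
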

\begin{theorem}\label{tr2}
(\cite{avriel1988generalized}). Let $\Phi_0$ be a positive convex function on $\Omega\subset\mathbb{R}^n$ and $\Phi_k$ be non-negative and concave functions on $\Omega$. Assume that $b\ge\sum_{i=1}^ka_i$ where $a_i>0.$ Then,
\begin{align*}
     F(x)=\Pi_{i=1}^k[\Phi_i(x)]^{a_i}/[\Phi_0(x)]^b,
\end{align*}
is semistrictly quasiconcave on $\Omega$.
\end{theorem}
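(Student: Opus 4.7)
The plan is to prove convexity of the upper-level sets of $F$ and then appeal to \autoref{tr1}. By \autoref{df1}, saying $F$ is semi-strictly quasi-concave is the same as saying $-F$ is semi-strictly quasi-convex, and by \autoref{tr1} the latter is equivalent to convexity of $L^{(\alpha)}(-F)=U^{(-\alpha)}(F)$ for every $\alpha\in\mathbb{R}$. Since each $\Phi_i$ is non-negative and $\Phi_0$ is positive, $F$ takes only non-negative values, so $U^{(\alpha)}(F)=\Omega$ is trivially convex whenever $\alpha\le 0$; only the case $\alpha>0$ requires real work.

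For $\alpha>0$, I would normalize the exponents by raising both sides to the power $1/b$: the inequality $F(x)\ge\alpha$ is equivalent to $\prod_{i=1}^{k}\Phi_i(x)^{p_i}\ge\alpha^{1/b}\Phi_0(x)$, where $p_i:=a_i/b$. The hypothesis $b\ge\sum_i a_i$ with $a_i>0$ forces $p_i\in(0,1]$ and $\sum_i p_i\le 1$; this is exactly the constraint that will permit a concavity argument on the product. It therefore suffices to show that $\{x\in\Omega : H(x)-\alpha^{1/b}\Phi_0(x)\ge 0\}$ is convex, where $H(x):=\prod_{i=1}^{k}\Phi_i(x)^{p_i}$.

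The technical heart of the argument is the concavity of $H$ on $\Omega$. I would pass through the auxiliary map $h:\mathbb{R}_{\ge 0}^{k}\to\mathbb{R}$, $h(y)=\prod_{i}y_i^{p_i}$, and verify two properties: $h$ is componentwise non-decreasing (immediate from the signs of its partial derivatives), and $h$ is concave when $\sum p_i\le 1$ with $p_i\in(0,1]$ (either by a direct Hessian calculation, or by induction on $k$ bootstrapped from the scalar fact that $t\mapsto t^p$ is concave on $[0,\infty)$ for $p\in(0,1]$). Once these are in hand, the standard composition rule that a concave, componentwise non-decreasing outer function composed with concave inner functions is concave gives $H=h(\Phi_1,\dots,\Phi_k)$ concave on $\Omega$. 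Then $\alpha^{1/b}\Phi_0-H$ is the sum of two convex functions (since $\alpha^{1/b}\Phi_0$ is convex by hypothesis and $-H$ is convex because $H$ is concave), hence convex, and $U^{(\alpha)}(F)$ is precisely its lower-level set at $0$, which is convex by \autoref{c1}. This yields convexity of every upper-level set of $F$ and completes the proof.

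The main obstacle is the concavity of the weighted geometric mean $h$ under $\sum p_i\le 1$; the constraint is tight (the product loses concavity as soon as $\sum p_i>1$), so the hypothesis $b\ge\sum a_i$ must enter essentially rather than through any slack estimate. I expect the cleanest write-up is by induction on $k$ using the product rule for concave non-negative functions together with the one-dimensional concavity of $t^p$ for $p\in(0,1]$, which avoids the bookkeeping of a direct Hessian argument.
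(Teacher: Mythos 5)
The paper itself does not prove this theorem---it is imported verbatim from Avriel et al.---so there is no internal proof to compare against; your attempt has to stand on its own. The core of your reduction is sound and is essentially the standard one: normalizing by the power $1/b$ to rewrite $F(x)\ge\alpha$ as $\prod_i\Phi_i(x)^{p_i}\ge\alpha^{1/b}\Phi_0(x)$ with $p_i=a_i/b$, $\sum_i p_i\le 1$, and then obtaining concavity of $H=\prod_i\Phi_i^{p_i}$ from the concavity and componentwise monotonicity of the weighted geometric mean is exactly where the hypothesis $b\ge\sum_i a_i$ enters, and every step of that part is correct.

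The gap is in the final step. Convexity of all upper-level sets characterizes \emph{quasiconcavity}, not \emph{semistrict} quasiconcavity, so the level-set route cannot deliver the conclusion. The equivalence you invoke (\autoref{tr1} as stated in the paper) is false in the direction you need: $J(x)=\min(|x|,1)$ on $\mathbb{R}$ is continuous with all lower-level sets convex, yet taking $w^{(1)}=3$, $w^{(2)}=0$, $\lambda=2/3$ gives $J(2)=1\ge J(w^{(1)})$ while $J(w^{(2)})=0<1$, violating the definition of semistrict quasiconvexity. Your argument therefore proves only that $F$ is quasiconcave. The repair is short and bypasses level sets at the last stage: having shown $H$ concave and nonnegative, note $F^{1/b}=H/\Phi_0$ and argue directly on segments. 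If $F^{1/b}(x_2)>F^{1/b}(x_1)=c$ (necessarily $c\ge 0$), then for $x_\lambda=\lambda x_1+(1-\lambda)x_2$ with $\lambda\in(0,1)$,
\begin{align*}
H(x_\lambda)\ \ge\ \lambda H(x_1)+(1-\lambda)H(x_2)\ >\ c\bigl(\lambda\Phi_0(x_1)+(1-\lambda)\Phi_0(x_2)\bigr)\ \ge\ c\,\Phi_0(x_\lambda),
\end{align*}
using concavity of $H$, the strict inequality $H(x_2)>c\,\Phi_0(x_2)$ together with $1-\lambda>0$, and $c\ge 0$ with convexity of $\Phi_0$. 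Hence $F^{1/b}(x_\lambda)>c$, which is precisely semistrict quasiconcavity of $F^{1/b}$, and $F=(F^{1/b})^{b}$ inherits it because $t\mapsto t^{b}$ is strictly increasing on $[0,\infty)$. With that substitution your proof is complete; note also that the same criticism applies to the paper's own use of \autoref{tr1} inside Lemma~\ref{lm3.1}.
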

\begin{Definition}\label{df2.4}
(\cite{Cottle1967LetterTT}). A quadratic form $\xi(w)=w^{T}Qw$ is convex function on convex set $\Omega\subseteq\mathbb{R}^n$, if only if $Q$ is  positive semi-definite, i.e.
\begin{align*}
    \hspace{-1.cm}\xi(w) \text{ is convex function on } \Omega \text{ if only if } \xi(w)\ge 0, \forall w\in\Omega. 
\end{align*}
\end{Definition}
\begin{theorem}\label{tr1}
        (\cite{Mangasarian1969}). A function $J(w)$ which is twice-differentiable is convex if and only if its hessian matrix (matrix of second-order partial derivatives) is positive semi-definite, i.e.
    \begin{align*}
        \text{with }\forall v : v^T\left(\nabla^T_{w}J(w)\right)v\ge 0,
    \end{align*}
\end{theorem}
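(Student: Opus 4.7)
The plan is to prove both directions via a second-order Taylor expansion, using the first-order (subgradient) characterization of convexity as a bridge between the Hessian condition and the convexity condition stated in Definition~\ref{df1}.

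For the sufficiency direction (Hessian positive semi-definite implies $J$ convex), I would first derive the first-order inequality $J(w^{(2)})\ge J(w^{(1)})+\nabla J(w^{(1)})^{T}(w^{(2)}-w^{(1)})$ for all $w^{(1)},w^{(2)}\in\Omega$. By Taylor's theorem with the Lagrange remainder, there exists $\xi$ on the segment joining $w^{(1)}$ and $w^{(2)}$ (which lies in $\Omega$ by convexity of the domain) such that
\begin{align*}
J(w^{(2)})=J(w^{(1)})+\nabla J(w^{(1)})^{T}(w^{(2)}-w^{(1)})+\tfrac{1}{2}(w^{(2)}-w^{(1)})^{T}\nabla^{2}J(\xi)(w^{(2)}-w^{(1)}).
\end{align*}
The assumed PSD property forces the quadratic remainder to be non-negative, yielding the first-order inequality. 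To return to Definition~\ref{df1}, I would fix $\alpha\in[0,1]$, set $w_{\alpha}=\alpha w^{(1)}+(1-\alpha)w^{(2)}$, write the first-order inequalities at $w_{\alpha}$ evaluated once at $w^{(1)}$ and once at $w^{(2)}$, and take the convex combination with weights $\alpha$ and $1-\alpha$. The gradient terms cancel by construction, and one recovers $\alpha J(w^{(1)})+(1-\alpha)J(w^{(2)})\ge J(w_{\alpha})$, which is exactly the defining inequality of convexity.

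For the necessity direction (convexity implies Hessian PSD), I would fix $w\in\Omega$ and a direction $v\in\mathbb{R}^{n}$, then choose $t>0$ small enough that $w+tv\in\Omega$. The second-order Taylor expansion gives
\begin{align*}
J(w+tv)=J(w)+t\nabla J(w)^{T}v+\tfrac{t^{2}}{2}v^{T}\nabla^{2}J(w)v+o(t^{2}).
\end{align*}
From Definition~\ref{df1} applied with $w^{(1)}=w$, $w^{(2)}=w+tv$ and a first-order Taylor expansion of $J$ as the convex-combination parameter tends to $1$, one obtains the first-order inequality $J(w+tv)\ge J(w)+t\nabla J(w)^{T}v$. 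Subtracting yields $\tfrac{t^{2}}{2}v^{T}\nabla^{2}J(w)v+o(t^{2})\ge 0$; dividing by $t^{2}$ and letting $t\to 0^{+}$ produces $v^{T}\nabla^{2}J(w)v\ge 0$. Since $v$ and $w$ were arbitrary, the Hessian is PSD throughout $\Omega$.

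The main obstacle I expect is the careful handling of the Taylor remainder: one has to ensure the intermediate point $\xi$ in the Lagrange form actually lies in the domain (this is where convexity of $\Omega$ is used, and not just openness), and that the $o(t^{2})$ term in the necessity direction is controlled uniformly enough to survive the limit. Once these technicalities are pinned down, both directions reduce to the bookkeeping one finds in the classical treatment of Mangasarian \cite{Mangasarian1969}.
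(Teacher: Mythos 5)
The paper states this theorem as an imported classical result (cited to Mangasarian) and supplies no proof of its own, so there is nothing internal to compare your argument against. Your proof is the standard and correct one: sufficiency via the Lagrange-form Taylor expansion to get the first-order inequality $J(w^{(2)})\ge J(w^{(1)})+\nabla J(w^{(1)})^{T}(w^{(2)}-w^{(1)})$, followed by the convex-combination trick at $w_{\alpha}$ where the gradient terms cancel; necessity via the Peano-form expansion along $w+tv$ combined with the first-order inequality, then dividing by $t^{2}$ and letting $t\to 0^{+}$. Both directions are sound, and your attention to the Lagrange point $\xi$ lying on the segment (hence in $\Omega$) is the right technicality to flag.

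One small point worth making explicit: in the necessity direction, the step ``choose $t>0$ small enough that $w+tv\in\Omega$'' for an \emph{arbitrary} direction $v$ requires $w$ to be an interior point of $\Omega$ (or $\Omega$ to be open). At boundary points of a lower-dimensional or closed $\Omega$ the argument only yields $v^{T}\nabla^{2}J(w)v\ge 0$ for feasible directions $v$, which is the correct general statement; the paper's version of the theorem is silent on the domain, so this is a gap in the statement rather than in your proof. Likewise, the limit $\lim_{\lambda\to 0^{+}}\lambda^{-1}\bigl(J(w+\lambda tv)-J(w)\bigr)=t\nabla J(w)^{T}v$ that you use to extract the first-order inequality from Definition~\ref{df1} deserves one line of justification (differentiability of $J$ along the segment), but that is routine.
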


\noindent where, $\nabla^T_{w}J(w)$ is the hessian matrix.
\begin{Proposition}\label{pr1}
    (\cite{Mangasarian1969}). Let $J(w)$ and $G(w)$ be two convex functions, $\lambda_1,\lambda_2$ are non-negative number. Then any linear combination of these two functions
    \begin{align*}
        \left(\lambda_1J+\lambda_2G\right)(w)=\lambda_1J(w)+\lambda_2G(w),
    \end{align*}
\end{Proposition}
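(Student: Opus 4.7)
The plan is to prove the convexity of $\lambda_1 J + \lambda_2 G$ directly from \autoref{df1}, which is the cleanest route since the statement only involves the defining inequality of convexity together with non-negativity of scalars. I would not appeal to the Hessian characterization (\autoref{tr1}) because that requires the extra assumption of twice differentiability, which is not part of the hypotheses here.

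First I would fix arbitrary points $w^{(1)}, w^{(2)} \in \Omega$ and a scalar $\alpha \in [0,1]$, and write down the convexity inequality from \autoref{df1} separately for $J$ and for $G$:
\begin{align*}
J\bigl(\alpha w^{(1)} + (1-\alpha) w^{(2)}\bigr) &\le \alpha J(w^{(1)}) + (1-\alpha) J(w^{(2)}), \\
G\bigl(\alpha w^{(1)} + (1-\alpha) w^{(2)}\bigr) &\le \alpha G(w^{(1)}) + (1-\alpha) G(w^{(2)}).
\end{align*}
Next I would multiply the first inequality by $\lambda_1 \ge 0$ and the second by $\lambda_2 \ge 0$, noting that the non-negativity is precisely what preserves the direction of the inequality, and then add the two resulting inequalities. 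Rearranging the right-hand side collects the $\alpha$ and $(1-\alpha)$ coefficients so that the sum becomes $\alpha (\lambda_1 J + \lambda_2 G)(w^{(1)}) + (1-\alpha)(\lambda_1 J + \lambda_2 G)(w^{(2)})$, which is exactly the convexity inequality for $\lambda_1 J + \lambda_2 G$. Since $w^{(1)}, w^{(2)}, \alpha$ were arbitrary, the conclusion follows.

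There is essentially no obstacle here; the only subtlety to flag is the role of the sign assumption $\lambda_1, \lambda_2 \ge 0$, which I would emphasize because a negative coefficient would flip the inequality and destroy convexity (a negative multiple of a convex function is concave). I would also briefly remark that the same argument extends by induction to any finite non-negative linear combination $\sum_{i=1}^{m} \lambda_i J_i$ of convex functions, which is the form in which the result is typically invoked later in the paper.
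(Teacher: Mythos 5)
Your proof is correct: fixing $w^{(1)},w^{(2)}\in\Omega$ and $\alpha\in[0,1]$, invoking the defining inequality of \autoref{df1} for $J$ and $G$, scaling by $\lambda_1,\lambda_2\ge 0$ (which preserves the inequality direction), and adding yields exactly the convexity inequality for $\lambda_1 J+\lambda_2 G$. The paper itself states this proposition only as a cited result from the literature and gives no proof, so there is nothing to compare against; your direct argument from the definition is the standard one, and your remarks on the necessity of non-negativity and the inductive extension to finitely many summands are both accurate.
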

\noindent is also a convex function.
\begin{Proposition}\label{p2.3}
(\cite{Boyd2006ConvexO}). Suppose $v:\mathbb{R}^k\to \mathbb{R}$, $l:\mathbb{R}^n\to\mathbb{R}^k$, consider $f=v\circ l:\mathbb{R}^n\to\mathbb{R}$. If both $v$ and $l$ are convex functions, $l$ is a non-decreasing function, then $f=v\circ l$ is convex function. 
\end{Proposition}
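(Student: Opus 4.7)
The plan is to verify the convexity inequality for $f = v \circ l$ directly from Definition \ref{df1}. Fix $w^{(1)}, w^{(2)} \in \mathbb{R}^n$ and $\alpha \in [0,1]$, and write $w_\alpha = \alpha w^{(1)} + (1-\alpha) w^{(2)}$. I would chain three inequalities in sequence. First, applying convexity of $l$ componentwise (each scalar coordinate $l_j$ is convex in the usual sense) gives the vector inequality
\[
l(w_\alpha) \;\le\; \alpha\, l(w^{(1)}) + (1-\alpha)\, l(w^{(2)}),
\]
understood coordinatewise in $\mathbb{R}^k$. Second, I would apply $v$ to both sides of this inequality; this is where the monotonicity hypothesis is consumed. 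Third, convexity of $v$ on $\mathbb{R}^k$ yields
\[
v\bigl(\alpha\, l(w^{(1)}) + (1-\alpha)\, l(w^{(2)})\bigr) \;\le\; \alpha\, v(l(w^{(1)})) + (1-\alpha)\, v(l(w^{(2)})).
\]
Stitching the three inequalities together produces exactly $f(w_\alpha) \le \alpha f(w^{(1)}) + (1-\alpha) f(w^{(2)})$, which is the convexity of $f$.

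The only real obstacle — and the one conceptual point in the argument — is the second step: for the componentwise inequality on $l(w_\alpha)$ to be preserved after applying $v$, the \emph{outer} function must be monotone with respect to the componentwise order on $\mathbb{R}^k$. The statement of the proposition places the non-decreasing hypothesis on $l$, but for the chain of inequalities to survive in the correct direction one should read it as monotonicity of $v$; this is the standard Boyd--Vandenberghe form of the rule, and without it a simple counterexample (e.g.\ $v(t) = -t$ on $\mathbb{R}$ composed with any convex $l$) would already break the conclusion.

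Once the ordering is granted the proof is essentially bookkeeping, so I would not expect any technical difficulty beyond clearly articulating the componentwise convention for inequalities in $\mathbb{R}^k$. As a sanity check I would also note that the $k=1$ specialization recovers the familiar scalar composition rule "convex non-decreasing of convex is convex", which suggests the argument is stated at the right level of generality for its intended use elsewhere in the paper.
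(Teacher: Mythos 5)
The paper offers no proof of Proposition~\ref{p2.3} at all --- it is stated as a citation to Boyd and Vandenberghe --- so there is nothing to compare your argument against except the statement itself. Your three-step chain (componentwise convexity of $l$, monotone application of the outer function, convexity of $v$) is the standard and correct derivation, and the one point you flag is exactly right: as literally written, the proposition places the non-decreasing hypothesis on the \emph{inner} map $l$, which makes the claim false. A concrete counterexample consistent with the stated hypotheses is $l(x)=e^{x}$ (convex and non-decreasing) with $v(t)=-t$ (convex), for which $f(x)=-e^{x}$ is strictly concave. The hypothesis must be that $v$ is non-decreasing with respect to the componentwise order on $\mathbb{R}^{k}$, since that is precisely what lets you apply $v$ to the vector inequality $l(w_{\alpha})\le\alpha\,l(w^{(1)})+(1-\alpha)\,l(w^{(2)})$ without reversing it. With that correction your proof is complete; the only cosmetic addition I would make is to state explicitly that ``$l$ convex'' means each coordinate $l_{j}$ is convex in the sense of Definition~\ref{df1}, so that the first inequality is meaningful coordinatewise. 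Since the proposition does not appear to be invoked elsewhere in the paper's proofs, the misstatement is harmless downstream, but it should still be fixed in the text.
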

\begin{Lemma}\label{lm2}
    $J(\mathbf{w})$ in \eqref{obj2} is a convex function relatives to $\mathbf{w}$
\end{Lemma}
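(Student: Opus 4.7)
My plan is to reduce $J(\mathbf{w})$ to a non-negative combination of softplus-type terms and verify convexity with the Hessian test, then invoke the sum rule.

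First, I would simplify algebraically. Since $\sigma(-t)=1/(1+e^{t})$, we have $\log\sigma(-t)=-\log(1+e^{t})$, and since $1-\sigma(-t)=e^{t}/(1+e^{t})$, we have $\log\bigl(1-\sigma(-t)\bigr)=t-\log(1+e^{t})=-\log(1+e^{-t})$. Setting $t_i=\mathbf{w}^{T}x^{(i)}$ and substituting into \eqref{obj2} gives the clean form
\begin{align*}
J(\mathbf{w})=\sum_{i=1}^{\mathbf{N}}\Bigl[y_{i}\log\bigl(1+e^{\mathbf{w}^{T}x^{(i)}}\bigr)+(1-y_{i})\log\bigl(1+e^{-\mathbf{w}^{T}x^{(i)}}\bigr)\Bigr].
\end{align*}
Because $y_i\in\{0,1\}$, the coefficients $y_i$ and $1-y_i$ are non-negative, so by Proposition \ref{pr1} it is enough to prove that each term $g_{i}^{\pm}(\mathbf{w}):=\log\bigl(1+e^{\pm\mathbf{w}^{T}x^{(i)}}\bigr)$ is convex in $\mathbf{w}$.

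Second, I would verify convexity of $g_i^{\pm}$ via Theorem \ref{tr1}. A direct computation gives $\nabla g_{i}^{\pm}(\mathbf{w})=\pm\,\sigma(\pm\mathbf{w}^{T}x^{(i)})\,x^{(i)}$ and
\begin{align*}
\nabla^{2}g_{i}^{\pm}(\mathbf{w})=\sigma(\pm\mathbf{w}^{T}x^{(i)})\bigl(1-\sigma(\pm\mathbf{w}^{T}x^{(i)})\bigr)\,x^{(i)}\bigl(x^{(i)}\bigr)^{T}.
\end{align*}
For any $v\in\mathbb{R}^{\mathbf{d}}$, we have $v^{T}\bigl[x^{(i)}(x^{(i)})^{T}\bigr]v=\bigl(v^{T}x^{(i)}\bigr)^{2}\ge 0$, while the scalar factor $\sigma(1-\sigma)\in(0,1/4]$ is strictly positive. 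Hence $\nabla^{2}g_{i}^{\pm}(\mathbf{w})$ is positive semi-definite by Definition \ref{df2.4}, and Theorem \ref{tr1} yields that each $g_{i}^{\pm}$ is convex on $\mathbb{R}^{\mathbf{d}}$.

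Finally, combining the two steps: $J$ is a non-negative linear combination of $2\mathbf{N}$ convex functions, so $\mathbf{N}{-}1$ iterated applications of Proposition \ref{pr1} deliver convexity of $J(\mathbf{w})$. I expect the only nontrivial step to be the algebraic simplification in the first paragraph (the identity $\log(1-\sigma(-t))=-\log(1+e^{-t})$ is easy to slip up on); once the softplus form is exposed, the Hessian calculation and the sum rule are routine. Note that one could alternatively try to invoke Proposition \ref{p2.3} for the composition $g_{i}^{\pm}=\log(1+e^{\,\cdot\,})\circ(\pm\mathbf{w}^{T}x^{(i)})$, but that proposition requires the inner function to be non-decreasing, which fails for a general affine map in $\mathbf{w}$; the Hessian route avoids this technicality altogether.
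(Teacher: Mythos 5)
Your proof is correct and takes essentially the same route as the paper's: both reduce $J$ to the two softplus-type components, compute the common Hessian $\sigma(1-\sigma)\,x^{(i)}(x^{(i)})^{T}$, verify positive semi-definiteness via $v^{T}x^{(i)}(x^{(i)})^{T}v=(v^{T}x^{(i)})^{2}\ge 0$, and conclude with the non-negative combination rule of Proposition~\ref{pr1} using $y_i\ge 0$ and $1-y_i\ge 0$. The only differences are cosmetic (your explicit softplus rewriting and the $\pm$ bookkeeping), so no further changes are needed.
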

\begin{proof}
To prove $J(\mathbf{w})$ is convex function, we'll prove two component functions
\begin{align*}
    -\log\left(\sigma\left(-\mathbf{w}^{T}x^{(i)}\right)\right) \text{ and } -\log\left(1-\sigma\left(-\mathbf{w}^{T}x^{(i)}\right)\right),
\end{align*}
are convex functions relatives to $\mathbf{w}$.\\
Firstly, we have
\begin{align*}
    -\log\bigg(\sigma\left(-\mathbf{w}^{T}x^{(i)}\right)\bigg)&=-log\left(\dfrac{1}{1+e^{(-\mathbf{w}^{T}x^{(i)})}}\right)\\
    &=log\left(1+e^{(-\mathbf{w}^{T}x^{(i)})}\right),
\end{align*}
has the hessian matrix
\begin{align*}
    &\nabla^{2}_\mathbf{w} \bigg[-\log\bigg(\sigma\left(-\mathbf{w}^{T}x^{(i)}\right)\bigg)\bigg]\\
    &=\sigma\left(-\mathbf{w}^{T}x^{(i)}\right)\left(1-\sigma\left(-\mathbf{w}^{T}x^{(i)}\right)\right)x^{(i)}\left(x^{(i)}\right)^T.
\end{align*}
Now, we use \autoref{tr1}
\begin{align*}
    &\forall v: v^T\nabla^{2}_\mathbf{w} \bigg[-\log\bigg(\sigma\left(-\mathbf{w}^{T}x^{(i)}\right)\bigg)\bigg] v\\
    &=\bigg[\sigma\left(-\mathbf{w}^{T}x^{(i)}\right)\left(1-\sigma\left(-\mathbf{w}^{T}x^{(i)}\right)\right)x^{(i)}\left(x^{(i)}\right)^T\bigg]\\
    &=\sigma\left(-\mathbf{w}^{T}x^{(i)}\right)\left(1-\sigma\left(-\mathbf{w}^{T}x^{(i)}\right)\right)\bigg[\left(x^{(i)}\right)^Tv\bigg]^2 \ge 0,
\end{align*}
hence, $ -\log\bigg(\sigma\left(-\mathbf{w}^{T}x^{(i)}\right)\bigg)$ is convex function.\\
Secondly, we have also
\begin{align*}
        &-\log\bigg(1-\sigma\left(-\mathbf{w}^{T}x^{(i)}\right)\bigg)=-log\left(1-\dfrac{1}{1+e^{(-\mathbf{w}^{T}x^{(i)})}}\right)\\
    &=\mathbf{w}^{T}x^{(i)}+log\left(1+e^{(-\mathbf{w}^{T}x^{(i)})}\right),
\end{align*}
has the hessian matrix
\begin{align*}
    &\nabla^{2}_\mathbf{w} \bigg[-\log\bigg(1-\sigma\left(-\mathbf{w}^{T}x^{(i)}\right)\bigg)\bigg]\\
    &=\nabla^{2}_\mathbf{w} \bigg[-\log\bigg(\sigma\left(-\mathbf{w}^{T}x^{(i)}\right)\bigg)\bigg]\\
    &=\sigma\left(-\mathbf{w}^{T}x^{(i)}\right)\left(1-\sigma\left(-\mathbf{w}^{T}x^{(i)}\right)\right)x^{(i)}\left(x^{(i)}\right)^T,
\end{align*}
use above proof, we have $ -\log\bigg(1-\sigma\left(-\mathbf{w}^{T}x^{(i)}\right)\bigg)$ is also convex function.\\
Last, use \autoref{pr1} with $y_i \ge 0$ and $1-y_i \ge 0$, we obtain
\begin{align*}
\hspace{-1.cm}-\sum_{i=1}^{\mathbf{N}}\bigg(y_{i}\log\left(\sigma\left(-\mathbf{w}^{T}x^{(i)}\right)\right)+(1-y_{i})\log\left(1-\sigma\left(-\mathbf{w}^{T}x^{(i)}\right)\right)\bigg), 
\end{align*}
is convex function relatives to $\mathbf{w}$.
\end{proof}


\section{Proposed Methodology}
\subsection{Main Methodology}
The fundamental idea is to combine additional information from the mean and variance of data, then utilize probability formulas to transform a deterministic problem into a problem with chance constraints.\\
\noindent Let $\varepsilon_{i}=\mathbf{w}^{T}x^{(i)}$, modelize minimize loss function \eqref{obj2} into problem \eqref{p1}:
\begin{align*}\tag{$P_1$}\label{p1}
    &\hspace{-1.5cm}-\sum_{i=1}^{\mathbf{N}}\bigg(y_{i}\log\left(\sigma\left(-\varepsilon_i\right)\right)+
    (1-y_{i})\log\left(1-\sigma\left(-\varepsilon_i\right)\right)\bigg)\\
    &s.t: \begin{cases}
    P\bigg(|\mathbf{w}^{T}\overline{x}^{(i)}-\varepsilon_{i}|\le \alpha_{i}\bigg)\ge \beta_{i}\\
         E\bigg[\mathbf{w}^{T}\overline{x}^{(i)}-\varepsilon_{i}\bigg]=0
    \end{cases},
\end{align*}
where $\alpha_{i},\dots,\alpha_{N}$ are the non-negative real numbers and $\beta_{1},\beta_{2},\dots,\beta_{N}$ are the probability level of each constraint occurring .\\
Use probability formulas, we have:
\begin{align*}
    &P\bigg(|\mathbf{w}^{T}\overline{x}^{(i)}-\varepsilon_{i}|\le \alpha_{i}\bigg)=P\bigg(-\alpha_{i}\le \mathbf{w}^{T}\overline{x}^{(i)}-\varepsilon_{i}\le \alpha_{i}\bigg)\\
    &=P\bigg(\mathbf{w}^{T}\overline{x}^{(i)}-\varepsilon_{i}\le \alpha_{i}\bigg) - P\bigg(\mathbf{w}^{T}\overline{x}^{(i)}-\varepsilon_{i}\le -\alpha_{i}\bigg)\\
    &=2P\bigg(\mathbf{w}^{T}\overline{x}^{(i)}-\varepsilon_{i}\le \alpha_{i}\bigg)-1 \ge \beta_i,
\end{align*}
hence,
\begin{align*}
P\bigg(\mathbf{w}^{T}\overline{x}^{(i)}-\varepsilon_{i}\le \alpha_{i}\bigg)\ge \dfrac{\beta_{i}+1}{2} \\
P\bigg(\mathbf{w}^{T}\overline{x}^{(i)}-\varepsilon_{i}\le -\alpha_{i}\bigg)\le \dfrac{1-\beta_{i}}{2},
\end{align*}
with,
\begin{align*}
    &-\sum_{i=1}^{\mathbf{N}}\bigg(y_{i}\log\left(\sigma\left(-\varepsilon_i\right)\right)+
    (1-y_{i})\log\left(1-\sigma\left(-\varepsilon_i\right)\right)\bigg)\\
    &=-\sum_{i=1}^{\mathbf{N}}\bigg(y_{i}\varepsilon_{i}-\log{\left(e^{\varepsilon_{i}}+1\right)}\bigg).
\end{align*}
We rewrote problem \eqref{p1} into problem \eqref{p2}:
\begin{align*}\tag{$P_2$}\label{p2}
 \underset{[\mathbf{w},\varepsilon]}{\min}J(\mathbf{w},\varepsilon)=-\sum_{i=1}^{\mathbf{N}}\bigg(y_{i}\varepsilon_{i}-\log{\left(e^{\varepsilon_{i}}+1\right)}\bigg)\\\\
    s.t: \begin{cases}
    P\bigg(\mathbf{w}^{T}\overline{x}^{(i)}-\varepsilon_{i}\le \alpha_{i}\bigg)\ge \dfrac{\beta_{i}+1}{2} \\\\
P\bigg(\mathbf{w}^{T}\overline{x}^{(i)}-\varepsilon_{i}\le -\alpha_{i}\bigg)\le \dfrac{1-\beta_{i}}{2}\\
         E\big[\mathbf{w}^{T}\overline{x}^{(i)}-\varepsilon_{i}\big]=0.
    \end{cases}
\end{align*}
Next, we assumed $\overline{x}_{j}^{(i)}$ is a normal random variable with mean is $m_{\overline{x}_{j}^{(i)}}$, we denoted:
\begin{itemize}
    \item $\mathbf{m}_{\overline{x}^{(i)}} = \bigg(m_{\overline{x}_{1}^{(i)}},\dots,m_{\overline{x}_{d}^{(i)}}\bigg)$ is sample mean of $\overline{x}^{(i)}$.
    \item $V_{\overline{x}^{(i)}}$ is sample variance-covariance symmetric matrix of $\overline{x}^{(i)}$ with shape $d\times d$.
    \item $\phi$ is the cumulative distribution function of the standard Gaussian$(0,1)$.
\end{itemize}
Then, we transformed chance constraints of problem \eqref{p2} and utilize the monotonic of the function $\phi$, it follows that:
\begin{align*}
     &P\bigg(\mathbf{w}^{T}\overline{x}^{(i)}-\varepsilon_{i}\le \alpha_{i}\bigg)\\
     &=P\Bigg(\dfrac{\mathbf{w}^{T}\overline{x}^{(i)}-\mathbf{w}^{T}\mathbf{m}_{\overline{x}^{(i)}}}{\sqrt{\mathbf{w}^{T}V_{\overline{x}^{(i)}}\mathbf{w}}}\le \dfrac{\alpha_{i}+\varepsilon_{i}-\mathbf{w}^T\mathbf{m}_{\overline{x}^{(i)}}}{\sqrt{\mathbf{w}^{T}V_{\overline{x}^{(i)}}\mathbf{w}}}\Bigg)\\
     &=\phi\Bigg(\dfrac{\alpha_{i}+\varepsilon_{i}-\mathbf{w}^T\mathbf{m}_{\overline{x}^{(i)}}}{\sqrt{\mathbf{w}^{T}V_{\overline{x}^{(i)}}\mathbf{w}}}\Bigg) \ge \dfrac{\beta_i +1}{2}\\
     &\Leftrightarrow\dfrac{\alpha_{i}+\varepsilon_{i}-\mathbf{w}^T\mathbf{m}_{\overline{x}^{(i)}}}{\sqrt{\mathbf{w}^{T}V_{\overline{x}^{(i)}}\mathbf{w}}} \ge \phi^{-1}\Bigg(\dfrac{\beta_i +1}{2}\Bigg)\\
     &\Leftrightarrow\phi^{-1}\Bigg(\dfrac{\beta_i +1}{2}\Bigg)-\dfrac{\alpha_{i}+\varepsilon_{i}-\mathbf{w}^T\mathbf{m}_{\overline{x}^{(i)}}}{\sqrt{\mathbf{w}^{T}V_{\overline{x}^{(i)}}\mathbf{w}}} \le 0,
\end{align*}
similarly,
\begin{align*}
     &P\bigg(\mathbf{w}^{T}\overline{x}^{(i)}-\varepsilon_{i}\le -\alpha_{i}\bigg)\\
     &=P\Bigg(\dfrac{\mathbf{w}^{T}\overline{x}^{(i)}-\mathbf{w}^{T}\mathbf{m}_{\overline{x}^{(i)}}}{\sqrt{\mathbf{w}^{T}V_{\overline{x}^{(i)}}\mathbf{w}}}\le \dfrac{-\alpha_{i}+\varepsilon_{i}-\mathbf{w}^T\mathbf{m}_{\overline{x}^{(i)}}}{\sqrt{\mathbf{w}^{T}V_{\overline{x}^{(i)}}\mathbf{w}}}\Bigg)\\
     &=\phi\Bigg(\dfrac{-\alpha_{i}+\varepsilon_{i}-\mathbf{w}^T\mathbf{m}_{\overline{x}^{(i)}}}{\sqrt{\mathbf{w}^{T}V_{\overline{x}^{(i)}}\mathbf{w}}}\Bigg) \le \dfrac{1-\beta_i}{2}\\
     & \Leftrightarrow\dfrac{-\alpha_{i}+\varepsilon_{i}-\mathbf{w}^T\mathbf{m}_{\overline{x}^{(i)}}}{\sqrt{\mathbf{w}^{T}V_{\overline{x}^{(i)}}\mathbf{w}}}-\phi^{-1}\Bigg(\dfrac{1-\beta_i}{2}\Bigg)\le 0,
\end{align*}
and,
\begin{align*}
    E\bigg[\mathbf{w}^{T}\overline{x}^{(i)}-\varepsilon_{i}\bigg]=\mathbf{w}^T\mathbf{m}_{\overline{x}^{(i)}}-\varepsilon_i=0.
\end{align*}
Thus, problem \eqref{p2} equivalent with problem \eqref{p3}:
\begin{align*}\tag{$P_3$}\label{p3}
    \hspace{-1.cm}\underset{[\mathbf{w},\varepsilon]}{\min}J(\mathbf{w},\varepsilon)=-\sum_{i=1}^{\mathbf{N}}\bigg(y_{i}\varepsilon_{i}-\log{\left(e^{\varepsilon_{i}}+1\right)}\bigg)\\
    \hspace{-1.cm}s.t: \begin{cases}
         \phi^{-1}\Bigg(\dfrac{\beta_i +1}{2}\Bigg)-\dfrac{\alpha_{i}+\varepsilon_{i}-\mathbf{w}^T\mathbf{m}_{\overline{x}^{(i)}}}{\sqrt{\mathbf{w}^{T}V_{\overline{x}^{(i)}}\mathbf{w}}} \le 0\\\\
        \dfrac{-\alpha_{i}+\varepsilon_{i}-\mathbf{w}^T\mathbf{m}_{\overline{x}^{(i)}}}{\sqrt{\mathbf{w}^{T}V_{\overline{x}^{(i)}}\mathbf{w}}}-\phi^{-1}\Bigg(\dfrac{1-\beta_i}{2}\Bigg) \le 0 \\ \mathbf{w}^T\mathbf{m}_{\overline{x}^{(i)}}-\varepsilon_i= 0. 
    \end{cases}
\end{align*}
Or,
\begin{align*}\tag{$P_3$}
    \underset{[\mathbf{w},\varepsilon]}{\min}J(\mathbf{w},\varepsilon)=-\sum_{i=1}^{\mathbf{N}}\bigg(y_{i}\varepsilon_{i}-\log{\left(e^{\varepsilon_{i}}+1\right)}\bigg)\\
    s.t: \begin{cases}
         \phi^{-1}\Bigg(\dfrac{\beta_i +1}{2}\Bigg)-\dfrac{\alpha_{i}}{\sqrt{\mathbf{w}^{T}V_{\overline{x}^{(i)}}\mathbf{w}}} \le 0\\
        \dfrac{-\alpha_{i}}{\sqrt{\mathbf{w}^{T}V_{\overline{x}^{(i)}}\mathbf{w}}}-\phi^{-1}\Bigg(\dfrac{1-\beta_i}{2}\Bigg) \le 0. \\
        -\mathbf{w}^T\mathbf{m}_{\overline{x}^{(i)}}+\varepsilon_i\le 0\\
        \mathbf{w}^T\mathbf{m}_{\overline{x}^{(i)}}-\varepsilon_i\le 0.
    \end{cases}
\end{align*}
To make it convenient to follow, we let:
\begin{align*}
    \hspace{-1.cm}\mathbf{Z} & = [Z_1,Z_2,\dots,Z_{N+d}]^T=[w_0,\dots,w_d,\varepsilon_{d+1},\dots,\varepsilon_{N+d}]^{T}(Z\neq 0),
\end{align*}
so, problem \eqref{p3} was rewritten:
\begin{align*}\tag{$P_4$}\label{p4}
    &\hspace{-1.cm}\underset{\mathbf{Z}}{\min}J(\mathbf{Z})=-\sum_{i=1}^{\mathbf{N}}\Bigg(y_{i}Z_{d+i}-\log{\left(e^{Z_{d+i}}+1\right)}\Bigg)\\ 
    s.t: &\begin{cases}
     \phi^{-1}\left(\dfrac{1+\beta_{i}}{2}\right)-\dfrac{\alpha_{i}}{\sqrt{\mathbf{Z}^{T}V_{[\overline{x}^{(i)},\mathbf{K}^{(i)}]}\mathbf{Z}}}\le 0 \\\\
    \dfrac{-\alpha_{i}}{\sqrt{\mathbf{Z}^{T}V_{[\overline{x}^{(i)},\mathbf{K}^{(i)}]}\mathbf{Z}}}-\phi^{-1}\left(\dfrac{1-\beta_{i}}{2}\right)\le 0\\
    \mathbf{P_iZ}\le 0\\
    -\mathbf{P_iZ}\le 0,
    \end{cases} 
\end{align*}
where, 
\begin{itemize}
    \item $i=\overline{1,N}$.
    \item $\mathbf{P_{i}} = \bigg[\mathbf{m}_{\overline{x}^{(i)}},0_{d+1},\dots,-1_{d+i},\dots,0_{d+N}\bigg]$ is the $i^{th}$ coefficient vectors.
    \item $\phi$ is the cumulative distribution function of the standard Gaussian(0,1).
    \item $\beta_{1},\dots,\beta_{N}$ are the probability level of each constraint occurring.
    \item $\mathbf{m}_{\overline{x}^{(i)}}=\left(m_{\overline{x}_{1}^{(i)}},\dots,m_{\overline{x}_{d}^{(i)}}\right)$ is sample mean of $\overline{x}^{(i)}$.
    \item $V_{\bigg[\overline{x}^{(i)},\mathbf{K}^{(i)}_{(1\times N)}\bigg]}=\begin{bmatrix}
V_{\overline{x}^{(i)}} & \textbf{O}_{(N\times N)},\\
\textbf{O}_{(N\times N)} &\textbf{O}_{(N\times N)}
\end{bmatrix}$ is sample variance-covariance
symmetric matrix of $\overline{x}^{(i)}$ with shape $(N+d)\times (N+d)$.
    \item $\mathbf{K}^{(i)}$ is vector $(0_{d+1},\dots,1_{d+i},\dots,0_{N+d})$.
    \item \textbf{O} is zero matrix with shape $N\times N$.
\end{itemize}
\begin{Proposition}\label{p3.1}
(\cite{timm2002applied}). A matrix is a variance-covariance matrix if only if it is positive semi-definite.
\end{Proposition}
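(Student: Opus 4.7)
The plan is to prove both directions of the equivalence, treating a variance-covariance matrix as the matrix $\Sigma = E[(X-\mu)(X-\mu)^T]$ of some random vector $X$ with mean $\mu$.

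For the necessity direction (variance-covariance $\Rightarrow$ positive semi-definite), I would fix an arbitrary vector $v \in \mathbb{R}^n$ and compute the quadratic form
\begin{align*}
v^T \Sigma v = v^T E\!\left[(X-\mu)(X-\mu)^T\right] v = E\!\left[\bigl(v^T(X-\mu)\bigr)^2\right] \ge 0,
\end{align*}
where the linearity of expectation lets me pull $v$ inside, and the squared scalar is non-negative pointwise. Together with the symmetry of $\Sigma$ (immediate from the symmetric form of the outer product), this gives positive semi-definiteness in line with \autoref{df2.4}.

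For the sufficiency direction (positive semi-definite $\Rightarrow$ variance-covariance), I would exhibit a random vector whose covariance equals the given symmetric PSD matrix $\Sigma$. The natural construction uses the spectral decomposition $\Sigma = Q \Lambda Q^T$ with $\Lambda$ diagonal and non-negative, from which I define $\Sigma^{1/2} = Q \Lambda^{1/2} Q^T$. Then, letting $Z$ be a random vector with i.i.d.\ components of mean $0$ and variance $1$ (for instance, standard Gaussians), the vector $X := \Sigma^{1/2} Z$ satisfies
\begin{align*}
E[X] = 0, \qquad E[XX^T] = \Sigma^{1/2}\, E[ZZ^T]\,(\Sigma^{1/2})^T = \Sigma^{1/2} I \, \Sigma^{1/2} = \Sigma,
\end{align*}
so $\Sigma$ is realized as a variance-covariance matrix.

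The main obstacle is really just the sufficiency construction: one must verify that the square root $\Sigma^{1/2}$ exists and behaves as claimed, which requires the symmetry of $\Sigma$ and the non-negativity of its eigenvalues (both guaranteed by PSD). Once that is in place, the identity $E[ZZ^T] = I$ makes the covariance calculation essentially a one-line check, and both directions combine to yield the stated equivalence.
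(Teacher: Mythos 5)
Your proof is correct and is the standard textbook argument: the quadratic form $v^{T}\Sigma v = E\left[\left(v^{T}(X-\mu)\right)^{2}\right]\ge 0$ gives necessity, and the construction $X=\Sigma^{1/2}Z$ with $E[ZZ^{T}]=I$ gives sufficiency. The paper itself supplies no proof of this proposition --- it is simply quoted from the cited reference --- so there is nothing to compare against; your two-directional argument is complete, and the only point worth flagging is that the ``positive semi-definite $\Rightarrow$ covariance matrix'' direction implicitly requires symmetry of the given matrix, which you correctly invoke when forming the spectral decomposition.
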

\begin{Lemma}\label{lm3.1}
    The constraints set of problem \eqref{p4} are convex set.
\end{Lemma}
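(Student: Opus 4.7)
The plan is to decompose the feasible set of problem \eqref{p4} as a finite intersection of convex sets, one family for each index $i = 1, \ldots, N$, and then invoke the standard fact that an intersection of convex sets is convex. For each $i$, the two affine constraints $\mathbf{P_i}\mathbf{Z} \le 0$ and $-\mathbf{P_i}\mathbf{Z} \le 0$ are halfspaces, hence convex, and together they define the affine hyperplane $\{\mathbf{Z} : \mathbf{P_i}\mathbf{Z} = 0\}$, which is again convex.

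The substantive step is handling the two chance constraints. By \autoref{p3.1}, the matrix $V_{[\overline{x}^{(i)}, \mathbf{K}^{(i)}]}$ is positive semidefinite, so it admits a symmetric square root $V_{[\overline{x}^{(i)}, \mathbf{K}^{(i)}]}^{1/2}$, and consequently
\[
\sqrt{\mathbf{Z}^{T} V_{[\overline{x}^{(i)}, \mathbf{K}^{(i)}]} \mathbf{Z}} \;=\; \bigl\| V_{[\overline{x}^{(i)}, \mathbf{K}^{(i)}]}^{1/2} \mathbf{Z} \bigr\|_{2}.
\]
The right-hand side is the composition of the Euclidean norm (a convex function) with the linear map $\mathbf{Z} \mapsto V_{[\overline{x}^{(i)}, \mathbf{K}^{(i)}]}^{1/2} \mathbf{Z}$, and is therefore a convex function of $\mathbf{Z}$.

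With this convexity in hand, I would isolate $\sqrt{\mathbf{Z}^{T} V_{[\overline{x}^{(i)}, \mathbf{K}^{(i)}]} \mathbf{Z}}$ in each chance constraint. Since $\beta_i \in (0,1)$, we have $\phi^{-1}((1+\beta_i)/2) > 0$, and the first constraint rearranges to
\[
\sqrt{\mathbf{Z}^{T} V_{[\overline{x}^{(i)}, \mathbf{K}^{(i)}]} \mathbf{Z}} \;\le\; \frac{\alpha_i}{\phi^{-1}\!\left(\tfrac{1+\beta_i}{2}\right)},
\]
which is a sublevel set of the convex function established above, hence convex by \autoref{c1}. For the second chance constraint, the symmetry identity $\phi^{-1}((1-\beta_i)/2) = -\phi^{-1}((1+\beta_i)/2)$ reduces it to exactly the same sublevel set, which is again convex. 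Intersecting over $i = 1, \ldots, N$ and over the four constraint families then completes the argument.

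The main obstacle I anticipate is precisely establishing convexity of $\mathbf{Z} \mapsto \sqrt{\mathbf{Z}^{T} V \mathbf{Z}}$. \autoref{df2.4} gives only convexity of the quadratic form $\mathbf{Z}^{T} V \mathbf{Z}$, and because $\sqrt{\cdot}$ is concave on $[0,\infty)$, one cannot invoke a ``convex-of-convex'' composition rule such as \autoref{p2.3} in the obvious way. The workaround is the square-root factorization above, after which convexity follows from the safer ``convex-of-affine'' composition, which applies without any monotonicity assumption. A secondary point worth checking carefully is that the two chance constraints do not degenerate to an equality defining a sphere — it is the identity $\phi^{-1}((1-\beta_i)/2) = -\phi^{-1}((1+\beta_i)/2)$ that ensures both reduce to the same upper bound on $\sqrt{\mathbf{Z}^{T} V \mathbf{Z}}$ rather than pinching it from both sides.
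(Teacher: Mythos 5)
Your proof is correct, but it takes a genuinely different route from the paper's. The paper handles the two chance constraints by showing $G_i^{(1)}$ and $G_i^{(2)}$ are \emph{semi-strictly quasi-convex}: it applies \autoref{tr2} with exponents $a_1=\tfrac{1}{3}$, $b=\tfrac{1}{2}$ to conclude that $\alpha_i/\sqrt{\mathbf{Z}^{T}V\mathbf{Z}}$ is semistrictly quasiconcave, and then invokes the level-set characterization (\autoref{tr1}) to get convexity of $L^{(0)}(G_i^{(k)})$. You instead prove outright convexity of $\mathbf{Z}\mapsto\sqrt{\mathbf{Z}^{T}V\mathbf{Z}}=\|V^{1/2}\mathbf{Z}\|_2$ via the norm-of-linear-map composition and rearrange each chance constraint into a single sublevel set of that convex function. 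Your route is more elementary (it needs no quasiconvexity machinery), yields a strictly stronger structural conclusion (the constraint is a second-order-cone-type set), and makes explicit a fact the paper never states: by $\phi^{-1}\bigl(\tfrac{1-\beta_i}{2}\bigr)=-\phi^{-1}\bigl(\tfrac{1+\beta_i}{2}\bigr)$ the two chance constraints are the \emph{same} set, so no ``pinching'' occurs. Your version is also more robust on a point the paper fumbles: $V_{[\overline{x}^{(i)},\mathbf{K}^{(i)}]}$ has a zero block and is therefore only positive \emph{semi}-definite, not positive definite as the paper asserts, so $\mathbf{Z}^{T}V\mathbf{Z}$ can vanish for $\mathbf{Z}\neq 0$; your rearranged inequality $\|V^{1/2}\mathbf{Z}\|_2\le\alpha_i/\phi^{-1}\bigl(\tfrac{1+\beta_i}{2}\bigr)$ remains well-defined and convex there, whereas the original fractional form (and the paper's positivity hypothesis for $\Phi_0$ in \autoref{tr2}) does not. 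The only caveats worth recording are that your rearrangement needs $\phi^{-1}\bigl(\tfrac{1+\beta_i}{2}\bigr)>0$, i.e.\ $\beta_i>0$ (which you note), and that on the degenerate set where the quadratic form vanishes your reformulation silently adopts the convention that the constraint is satisfied.
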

\begin{proof}
    Consider $\Omega=\{Z\in\mathbb{R}^{N+d}:G_i^{(k)}(Z)\le 0,k=\overline{1,4}\}$ is feasible region (the constraints set) of problem \eqref{p4}.\\
    Let
    \begin{align*}
        &G_i^{(1)}(Z) =  \phi^{-1}\left(\dfrac{1+\beta_{i}}{2}\right)-\dfrac{\alpha_{i}}{\sqrt{\mathbf{Z}^{T}V_{[\overline{x}^{(i)},\mathbf{K}^{(i)}]}\mathbf{Z}}}\\
        &G_i^{(2)}(Z) = \dfrac{-\alpha_{i}}{\sqrt{\mathbf{Z}^{T}V_{[\overline{x}^{(i)},\mathbf{K}^{(i)}]}\mathbf{Z}}}-\phi^{-1}\left(\dfrac{1-\beta_{i}}{2}\right)\\
        &G_i^{(3)}(Z) = \mathbf{P_iZ}\\
        &G_i^{(4)}(Z) = -\mathbf{P_iZ}.
    \end{align*}
Now, we prove $G_i^{(1)}(Z)$ and $G_i^{(2)}(Z)$ are the semi-strictly quasi-convex functions on $\Omega$.\\\\
In fact, $\dfrac{\alpha_{i}}{\sqrt{\mathbf{Z}^{T}V_{[\overline{x}^{(i)},\mathbf{K}^{(i)}]}\mathbf{Z}}}=\dfrac{\sqrt[3]{\alpha_{i}^3}}{\sqrt{\mathbf{Z}^{T}V_{[\overline{x}^{(i)},\mathbf{K}^{(i)}]}\mathbf{Z}}}$ is a semi-strictly quasi-concave function on $\Omega$, cause of $\alpha_i^3$ is a non-negative constant function. In addition, from \autoref{p3.1} and $Z\neq 0$ then $V_{[\overline{x}^{(i)},\mathbf{K}^{(i)}]}$ is positive definite, then use \autoref{df2.4} we implied $\mathbf{Z}^{T}V_{[\overline{x}^{(i)},\mathbf{K}^{(i)}]}\mathbf{Z}$ is a positive convex function. Thus, use \autoref{tr2}, choose $a_1=\dfrac{1}{3},b=\dfrac{1}{2}$, and collaborate with $\phi^{-1}\left(\dfrac{1+\beta_{i}}{2}\right)$ is a constant function, we imply $G_i^{(1)}(Z)$ is a semi-strictly quasi-convex functions on $\Omega$. Similarity, we obtain $G_i^{(2)}(Z)$ is a semi-strictly quasi-convex functions on $\Omega$.\\
From \autoref{tr1}, we have
\begin{align*}
    \hspace{-1.8cm}L^{(0)}\bigg(G_i^{(1)}(Z)\bigg)=\Bigg\{\mathbf{Z}\in\Omega\Bigg|\phi^{-1}\left(\dfrac{1+\beta_{i}}{2}\right)-\dfrac{\alpha_{i}}{\sqrt{\mathbf{Z}^{T}V_{[\overline{x}^{(i)},\mathbf{K}^{(i)}]}\mathbf{Z}}}\le 0\Bigg\},
\end{align*}
and
\begin{align*}
    \hspace{-1.8cm}L^{(0)}\bigg(G_i^{(2)}(Z)\bigg)=\Bigg\{\mathbf{Z}\in\Omega\Bigg|\dfrac{-\alpha_{i}}{\sqrt{\mathbf{Z}^{T}V_{[\overline{x}^{(i)},\mathbf{K}^{(i)}]}\mathbf{Z}}}-\phi^{-1}\left(\dfrac{1-\beta_{i}}{2}\right)\le 0\Bigg\},
\end{align*}
are two convex sets.\\
Furthermore, $G_i^{(3)}(Z)$ and $G_i^{(4)}(Z)$ are two linear functions, thus $G_i^{(3)}(Z)$ and $G_i^{(4)}(Z)$ are two convex functions. Consequently, based on \autoref{c1}, then
\begin{align*}
    L^{(0)}\bigg(G_i^{(3)}(Z)\bigg)=\{Z\in\Omega|\mathbf{P_iZ}\le 0\},
\end{align*}
and
\begin{align*}
    L^{(0)}\bigg(G_i^{(4)}(Z)\bigg)=\{Z\in\Omega|-\mathbf{P_iZ}\le 0\},
\end{align*}
are two convex sets.\\
In short, the feasible region (the constraints set) $\Omega$ of problem \eqref{p4} is a convex set.
\end{proof}
\indent From \autoref{lm3.1} and \autoref{lm2}, we implied problem \eqref{p4} is a convex programming problem. Hence, we used convex programming tools such as the Lagrange multiplier method to tackle the solving of optimal solutions.
\subsection{The main algorithm}
\indent In this section, we describe the main algorithms for our proposed model. But before applying the main algorithms, we scored $y$ data by the original Logistic Regression model and utilized clustering algorithms to estimate the sample mean and variance-covariance matrix.
\subsubsection{The sample mean and variance-covariance matrix estimation}
K-means clustering and Q-quantiles clustering were used in the experimentation with real datasets. We divided $X$ into groups using K-means clustering and then calculated the mean and variance-covariance matrices for each group. We also estimated the sample mean of $y$, which was in the appropriate group, as part of this addition.

\textit{A. K-Means clustering algorithm}

In the first approach, we used the K-means clustering algorithm.
Clustering is a popular tool for finding groups or clusters which have the same feature in multivariate data and has found lots of applications in biology (see \cite{Eisen1998}), medicine (see \cite{Can1987}), psychology, and economics (\cite{Boyko2019}).

We had difficulty finding the number of data in each cluster because of the randomness of the cluster centers at initialization. An obligatory way for the clustering algorithm is to ask for input on the number of clusters in advance, which demands knowledge of the underlying datasets.  K-Means is a simple unsupervised learning algorithm that solves difficult clustering problems. Hence, we must provide deterministic k clusters for solving a problem (see \cite{Kodinariya2013}).

\indent The variance-covariance matrix will not be computed if a cluster contains just one data point. As a result, we chose a predetermined number of clusters to be employed in the clustering process, ensuring that the number of data points in each cluster is more than one. It is possible to estimate the median value and the variance-covariance matrices by grouping the data from $X$ into a cluster. The original logistic regression model will be used to translate $y$ into a value in the interval $(0,1)$. Then, the value of $y$ is commensurate with $X$ in a cluster together is calculated by mean.

\noindent\textit{B. Q-Quantiles algorithm}

\indent The definition of quantile was proposed in the paper of \cite{Takeuchi2006}. On the contrary, the K-Means clustering algorithm follows as Quantile estimation is divided into quantile levels for data of $y$. Then, at the same level, the mean and variance-covariance matrix of $X$ will be estimated. To estimate Large-scale Data, we need to define Q-Quantiles, which are values that divide a finite set into Q subsets of the same size. In our algorithm, we use a uniform probability distribution, so Q-Quantiles have the values $\{1/q, 2/q,\dots, q-1/q\}$.

\indent Moreover, instead of using a uniform probability distribution, we approached a new method based on the ideas of \cite{Chen2020} to divide quantile levels compatibly. Similar to the K-Means clustering algorithm, the Q-Quantiles algorithm will be implemented ineffectively if any quantile level contains the unit observation.

\subsubsection{The pseudocode and procedure of the main algorithm}
The pseudocode of the main algorithm was demonstrated by Algorithm~\ref{alg3}.\\ Algorithm~\ref{alg1} is our proposed algorithm for solving the solution using convex programming, its inputs include the sample mean, the sample variance-covariance matrix of $X$ data, the sample mean of $y$ data, and $\alpha,\beta$. Its output is an optimal solution to Problem~\ref{p4}.
\begin{algorithm}[H]
  \caption{The pseudocode of the Stochastic Logistic Regression model (SLR)}\label{alg1}
  \begin{algorithmic}[1]
    \Require \text{Data input:} $meanX, meanY, covX, \alpha, \beta$
    \Ensure \text{The weight:} $w$
    \State Define the objective function ($F$) of Problem~\ref{p4};
    \State Calculate constraint functions ($G$), and the coefficient of freedom ($h$) from $meanX, meanY, covX$;
    \State Use $CVXOPT$ package
    \begin{align*}
        w = CVXOPT(F,G,h,dim);
    \end{align*}
    \State \textbf{return} $w$.
  \end{algorithmic}
\end{algorithm}
Algorithm~\ref{alg2} is the pseudocode of the Simulated Annealing algorithm for solving the optimal parameters $\alpha,\beta$, its inputs consist of the sample mean, the sample variance-covariance matrix of $X$ data, the sample mean of $y$ data, and $X\_train,y\_train$. Its outputs return the optimal $\alpha,\beta$ parameters, and the corresponding optimal solution.
\begin{algorithm}[H]
  \caption{The pseudocode of the Simulated Annealing algorithm (SA)}\label{alg2}
  \begin{algorithmic}[1]
    \Require \text{Data input:} $meanX, meanY, covX, X\_train, y\_train$
    \Ensure $\alpha_0,\beta_0,w_{0}$
    \State Set initial 
        \begin{align*}
            &\alpha_0\ge 0, \beta_0\in(0,1),\\
            & w_0=SLR(meanX, covX, meanY, \alpha_0, \beta_0),\\
        &f(w_0)=-log\_loss(w_0, X\_train, y\_train);
        \end{align*}
    \While{The iteration isn't satisfied}
        \State Generate new initial
        \begin{align*}
          &\alpha_{new},\beta_{new},\\
          &w_{new}=SLR(meanX,covX,meanY,\alpha_{new},\beta_{new}),\\
          &f(w_{new})=-log\_loss(w_{new}, X\_train, y\_train);
        \end{align*}
        \State Calculate $\Delta f=f(w_{new})-f(w_0)$;
        \If {$\Delta f\le 0,\alpha_{new}\ge 0,\beta_{new}\ge 0$}
            \State $w_0=w_{new}, \alpha_0 = \alpha_{new}, \beta_0 = \beta_{new}$,
            \If{The stopping criteria is met}
                \State Terminate the computation process;
            \Else{}
                \State The iteration isn't satisfied;
            \EndIf
        \Else{}
            \State Accept the new solution based on Metropolis,
            \State The iteration isn't satisfied;
        \EndIf
    \EndWhile
    \State \textbf{return} $\alpha_0,\beta_0,w_0$.
  \end{algorithmic}
\end{algorithm}
\begin{algorithm}[H]
  \caption{The pseudocode of the main algorithm}\label{alg3}
  \begin{algorithmic}[1]
    \Require \text{Data input:} $X, y$
    \Ensure $w_{opt},\alpha_{opt},\beta_{opt}$
    \State Split, normalize data into train, and validation set.
    \State Scoring $y\_{train}$ data
    \begin{align*}
        &coef = LogisticRegression(X\_train, y\_train),\\
        &y\_score = \dfrac{1}{1+e^{-coef^{T}X\_train}};
    \end{align*}
    \If {Kmean} 
        \State Enter $n_c$ clusters;
        \State Cluster $X$ data 
        \begin{align*}
          [X,\text{scope}]=Kmeans\left(X\_train,n_c\right);
        \end{align*}
    \Else { Quantile}
        \State Enter $n_q$ levels;
        \State Divide $y$ data 
        \begin{align*}
          [y,\text{scope}]=Quantile\left(y\_score,n_q\right);
        \end{align*}
    \EndIf
    \State Estimate $covX, meanX, meanY$
        \begin{align*}
          &covX={\big(X.group(\text{by=['scope']})\big).\text{cov}()},\\
          &meanX={\big(X.group(\text{by=['scope']})\big).\text{mean}()},\\
          &meanY={\big(y.group(\text{by=['scope']})\big).\text{mean}()};
        \end{align*}
    \State Use Simulated Annealing algorithm
    \begin{align*}
        w_0 = SA(meanX, meanY, covX, X\_train, y\_train);
    \end{align*}
    \State \textbf{return} $w_{opt}=w_0,\alpha_{opt}=\alpha_0,\beta_{opt}=\beta_0$.
  \end{algorithmic}
\end{algorithm}
In the training process for the best optimal cluster, we set the maximum cluster to less than half the entire observations, then conduct training from cluster 1 to the max cluster. The best solution will be preserved with the corresponding $\alpha,\beta,$ and the cluster if the accuracy score on the validation set is greater than the original model is. The training process was illustrated in Figure~\ref{model1}.
\section{Results}
\subsection{Data}
We used the Heart Failure Clinical Records dataset, and the Rice Osmancik Cammeo dataset to perform our experiments. UCI Machine Learning Repository site was used to get all datasets.

 \begin{figure}[h!]
    \centering
    \includegraphics[scale=0.7]{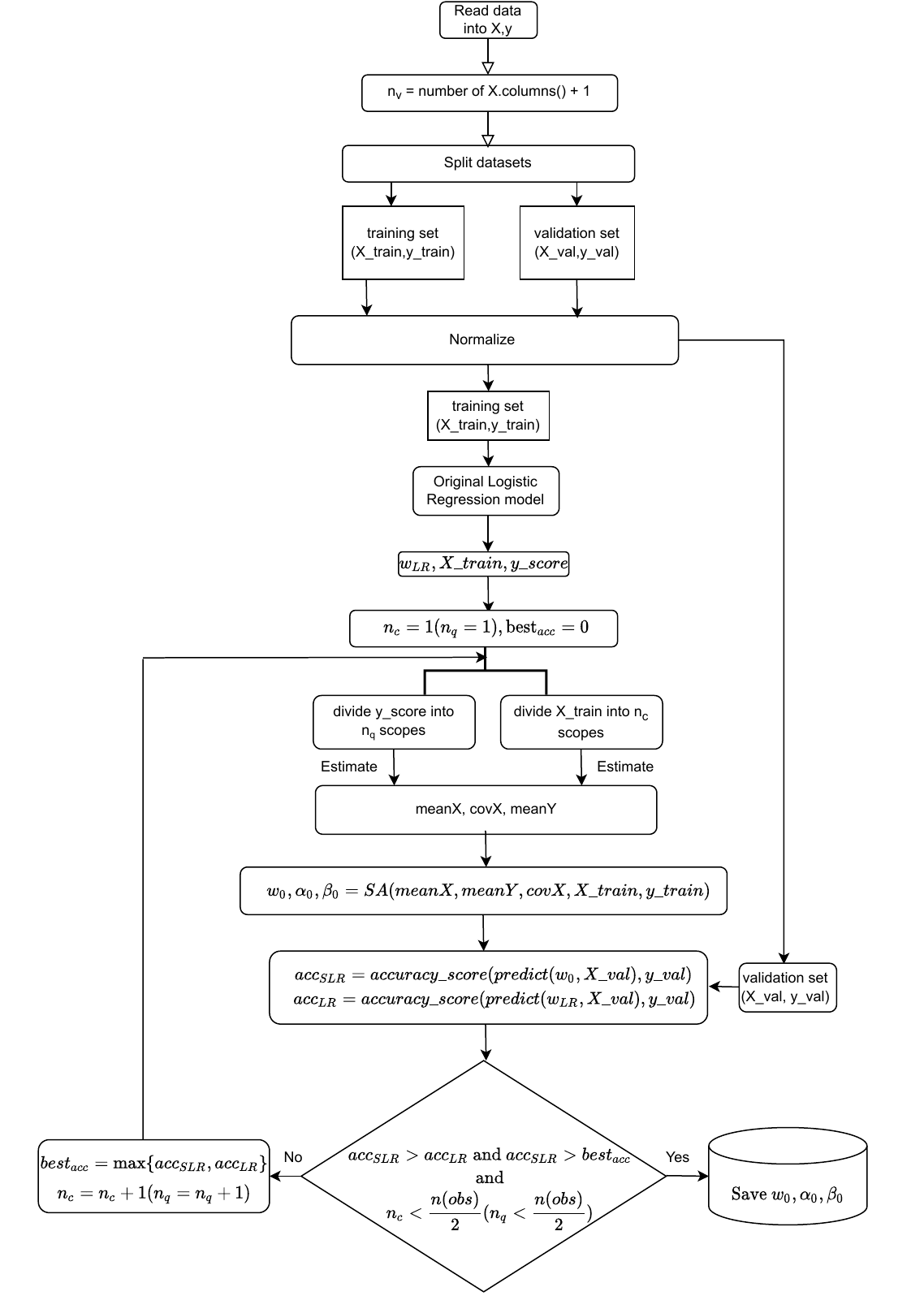}
    \caption{The training process.}
    \label{model1}
\end{figure}

\textbf{Heart Failure Clinical Records}

In April and December of 2015, the Faisalabad Institute of Cardiology and the Allied Hospital gathered medical records of 299 heart failure patients who were treated here. The patients, who ranged in age from 40 to 95 years old, were divided into two groups: 105 females and 194 males, respectively. They were all classified as classes III or IV of the New York Heart Association categorization of the stages of heart failure because they had left ventricular systolic dysfunction and prior heart failures. The dataset contains 13 attributes including Age, Anaemia, Creatinine phosphokinase, Diabetes, Ejection fraction, High blood pressure, Platelets, Serum creatinine, Serum sodium, Sex, Smoking, Time, and the prediction variable Death event.

\textbf{Rice Osmancik Cammeo}

For this research, the Osmancik and Cammeo rice varieties, which have been farmed in Turkey since 1997 and 2014, respectively, have been chosen. Osmancik species have broad, lengthy, glassy, and drab appearances when seen as a whole. The typical features of the Cammeo species are broad and long, glassy, and dull in appearance. Photos of rice grains were collected from the two species, which were then processed to get feature inferences from 3810 images. Each rice grain had its morphology analyzed for seven distinct morphological traits.
\subsection{Main results}
In the Heart Failure Clinical Records dataset, we used 70 percent of patients to train a stochastic logistic regression, and 30 percent of the remainder to verify it on the complete dataset. Furthermore, we compare our model's performance with results, which were mentioned by~\cite{Chicco2020} including metrics such as F1 score, Accuracy score, True positive rate, True negative rate, Precision-Recall curve, and Roc-AUC score. The formulas are shown in Table~\ref{t2}. Figure~\ref{fig:acc},~\ref{fig:f1},~\ref{fig:pre},~\ref{fig:re} perform the change of Accuracy score, F1 score, Precision score, and Recall score over the number of clusters (level quantiles) with the K-Means clustering algorithm, and Q-Quantiles algorithm correspondingly. In the 10th and 25th clusters, the accuracy score of our proposed model by both K-Means clustering and Q-Quantiles is outstanding compared to the original model. Besides, in the same above clusters, the F1 score, Precision score, and recall score are all greater than the metrics of the original model.

\begin{figure}[]
    \centering
    \begingroup
    \includegraphics[width=0.45\textwidth]{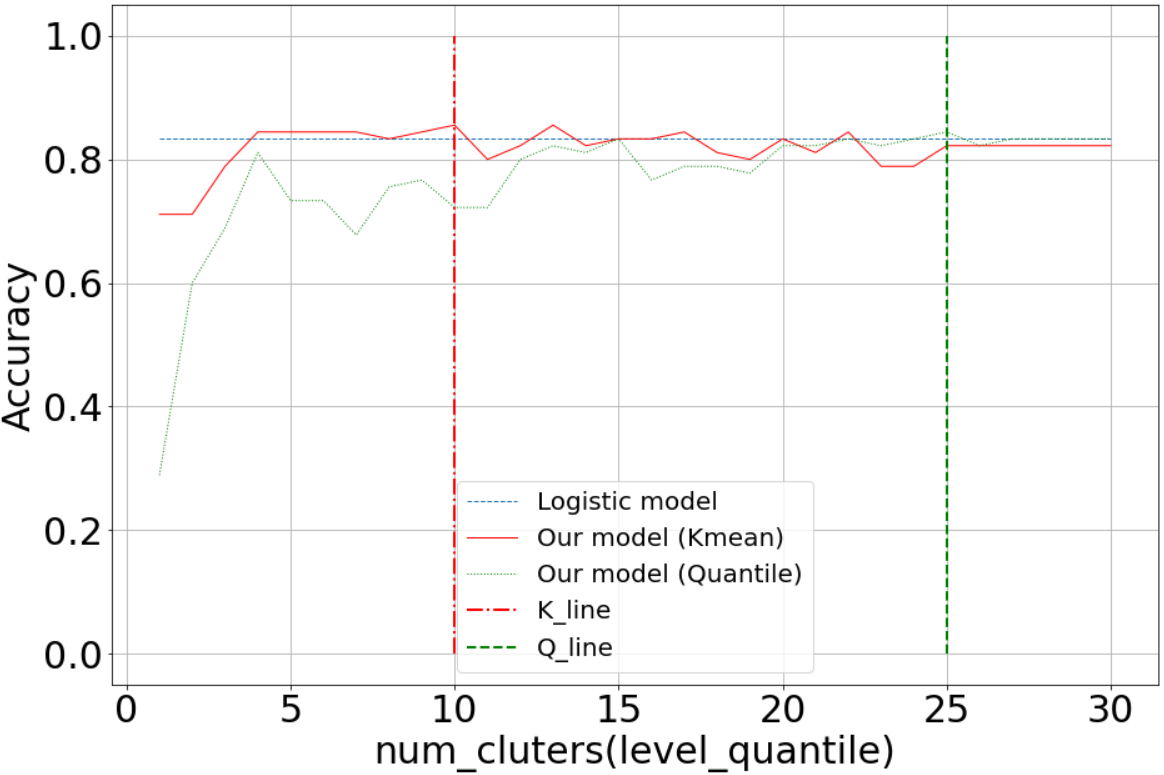}
    \caption{Accuracy score on Heart Failure Clinical Records dataset. }\label{fig:acc}
    \endgroup
\end{figure}
\begin{figure}[]
    \centering

    \begingroup
    \includegraphics[width=0.45\textwidth]{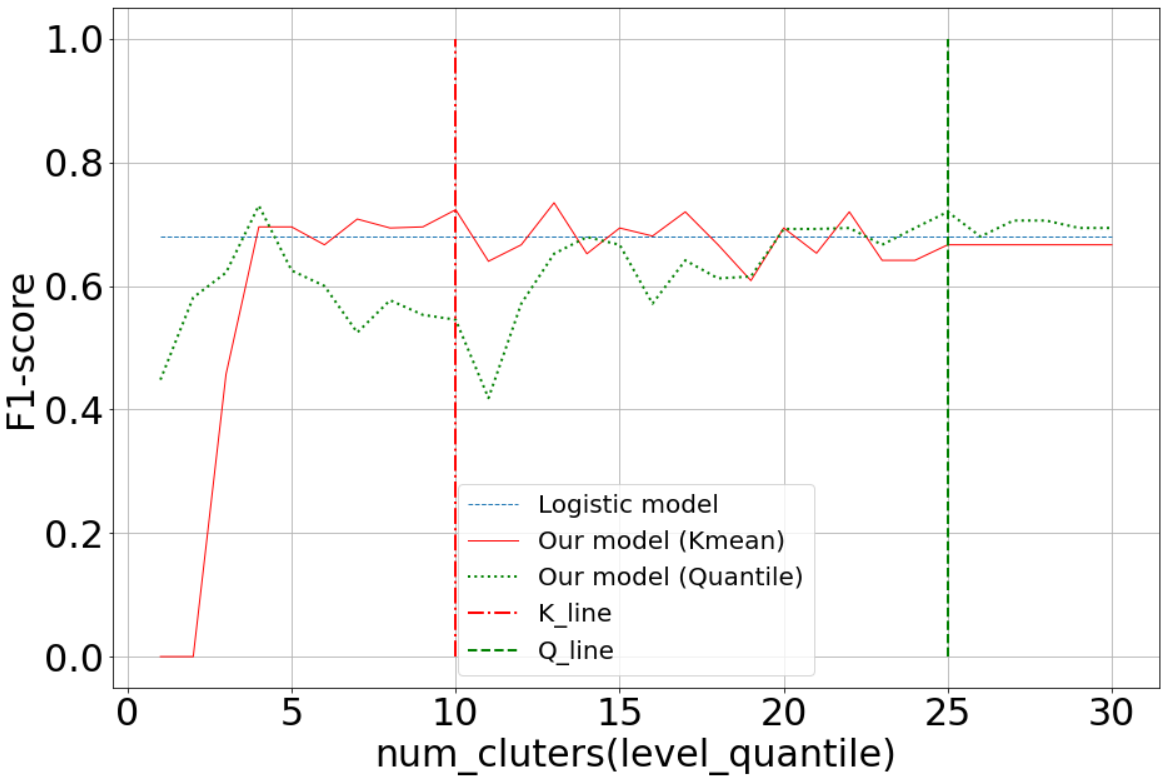}
    \caption{F1 score on Heart Failure Clinical Records dataset.}\label{fig:f1}
    \endgroup
\end{figure}
\begin{figure}[]
    \centering
    \begingroup
    \includegraphics[width=0.45\textwidth]{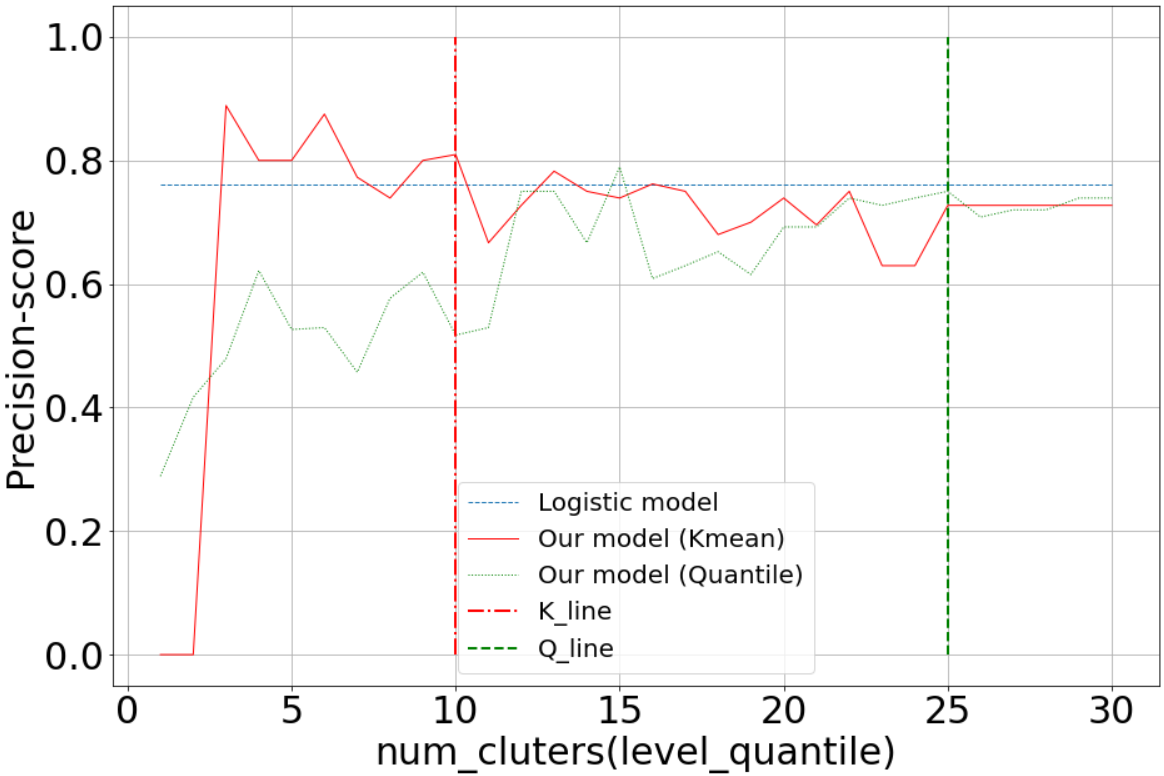}
    \caption{Precision score on Heart Failure Clinical Records dataset.}\label{fig:pre}
    \endgroup
\end{figure}
\begin{figure}[]
    \centering
    \begingroup
    \includegraphics[width=0.45\textwidth]
    {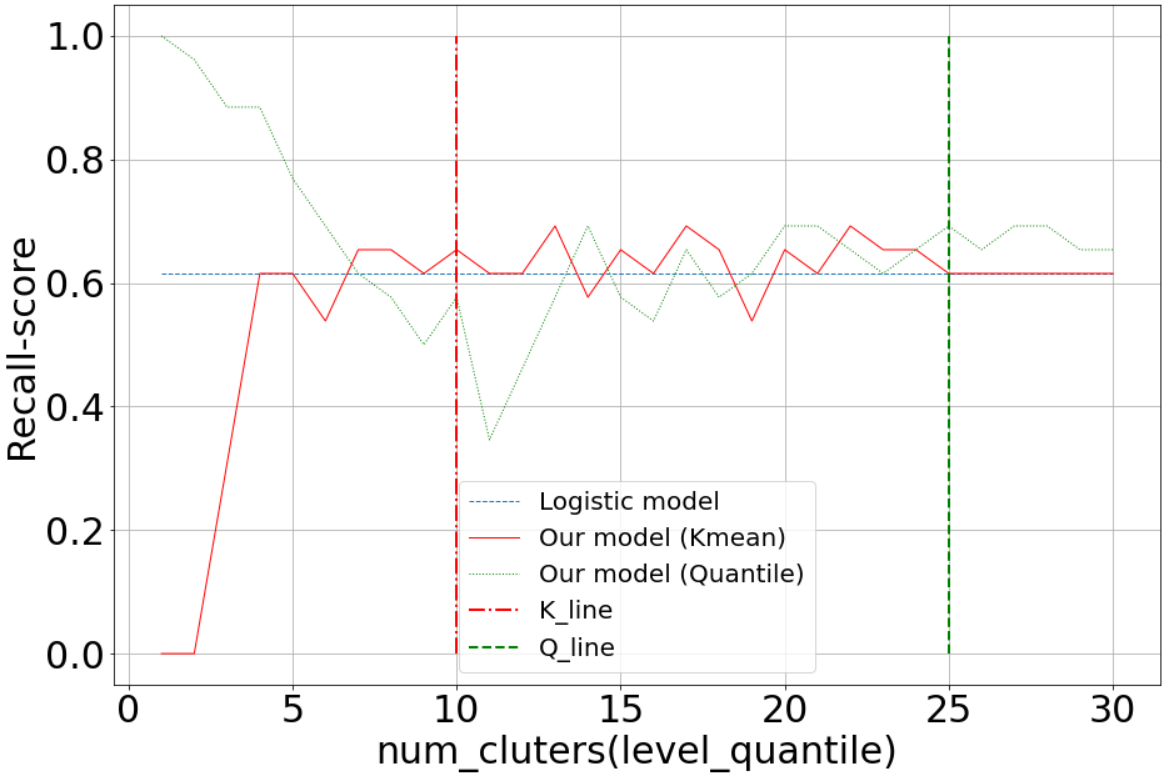}
    \caption{Recall score on Heart Failure Clinical Records dataset.}
    \endgroup
\end{figure}

\indent With the Rice Osmancik Cammeo dataset, we used 75 percent to train a stochastic logistic regression, and 25 percent to validate the dataset. In addition, we compare our model's performance with results, which were mentioned by~\cite{Cinar2019} including metrics in Table~\ref{t1}. The results are shown in Table~\ref{t3}, we trained the follow-up of 4 executions and calculated the mean of these evaluation metrics.

\indent Table \ref{t4} and Table \ref{t5} show the best of the evaluation metrics in $\alpha, \beta$, and the clusters (level quantiles) correspondingly, these metrics were greater and improved the ability prediction than the normal method was.
\begin{table*}[h]
\centering
\caption{Evaluation Metrics were based on TP: True Positive, TN: True negative, FP: False positive, and FN: False negative.}\label{t1}
\begin{tabular}{c|c|c|c}
\toprule
Metric & Formula & Metric & Formula \\
\midrule
Accuracy & $\dfrac{TP+TN}{TP+FP+TN+FN}$ & False Positive & $\dfrac{FP}{TN+FP}$ \\
 & &\\
Sensivitiy (TPR) & $\dfrac{TP}{TP+FN}$ & False Discovery & $\dfrac{FP}{TP+FP}$\\
 & &\\
Specificity (TNR) & $\dfrac{TN}{TN+FP}$ & False Negative & $\dfrac{FN}{TP+FN}$ \\
 & &\\
Precision & $\dfrac{TP}{TP+FP}$ & MCC & $\dfrac{TP\times TN-FP\times FN}{\sqrt{(TP+FP)(TP+FN)(TN+FP)(TN+FN)}}$\\
 & &\\
F1 score & $\dfrac{2TP}{2TP+FP+FN}$ & PR AUC & Precision-recall under the curve\\
 & &\\
Negative Predictive \\ Value & $\dfrac{TN}{TN+FN}$ & ROC AUC &  Receiver operating characteristic under the curve \\
\bottomrule
\end{tabular}

\end{table*}
\begin{center}
\begin{table*}[h!]
\caption{2 top rows: Mean follow-up time of 100 executions for survival prediction results by~\cite{Chicco2020}. 2 bottom rows: Top 2 models which have the best performance on all of the metrics follow-up time of 100 executions. The prediction results of our proposed model using the K-means clustering algorithm ($n_c = [1,30]$).}\label{t2}
\begin{tabular}{cccccccc}
\toprule

Model & MCC & F1 score & Accuracy & TPR & TNR & PR AUC & ROC AUC  \\
\midrule
Logistic regression & $+$0.616 & 0.719 & 0.838 & 0.785 & 0.860 &  0.617 &  0.822 \\
(3 selected features) & & & & & & &\\
Logistic regression & $+$0.607 & 0.714 & 0.833 & 0.780 & 0.856 & 0.612 & 0.818  \\
(all features) & & & & & & &\\
SLR \textbf{(ours)}& \bfseries\underline{$+$0.685 }& \bfseries0.790 & \bfseries\underline{0.850} & \bfseries0.785 & \bfseries\underline{0.891} & \bfseries\underline{0.709} & \bfseries0.834\\
(K-Means algorithm, all features) & & & & & & &\\
SLR \textbf{(ours)}& \bfseries$+$0.677 & \bfseries\underline{0.793} & \bfseries\underline{0.850} & \bfseries\underline{0.809} & \bfseries0.872 & \bfseries0.699 & \bfseries\underline{0.841}\\
(Q-Quantiles algorithm, all features) & & & & & & &\\
\bottomrule
\end{tabular}

\end{table*}

\begin{table*}[h!]
\caption{7 top rows: Mean follow-up time of 4 executions for survival prediction results by ~\cite{Cinar2019}. 2 bottom rows: Top 2 models which have the best performance on all of the metrics follow-up time of 4 executions. The prediction results of our proposed model using the K-means clustering (Q-Quantiles) algorithm ($n_c (n_q) = [1,30]$).}\label{t3}
\begin{tabular}{ccccccccccc}
\toprule

Model & Accuracy & Sensivitiy & Specificity & Precision & F1 score & NPV & FPR & FDR & FNR  \\
\midrule
LR & 0.930 & 0.923 & 0.936 & 0.914 & 0.918 &  0.942 &  0.064 & 0.087 & 0.077 \\
MLP & 0.929 & 0.922 & 0.934 & 0.910 & 0.916 & 0.942 & 0.066 & 0.090 & 0.078\\
SVM & 0.928 & 0.917 & \underline{0.937} & 0.915 & 0.916 & 0.938 & \underline{0.063} & 0.085 & 0.083  \\
DT & 0.925 & 0.912 & 0.935 & 0.913 & 0.912 & 0.934 & 0.065 & 0.087 & 0.088\\
RF & 0.924 & 0.914 & 0.932 & 0.908 & 0.911 & 0.936 & 0.069 & 0.092 & 0.086\\
NB & 0.917 & 0.909 & 0.923 & 0.896 & 0.902 & 0.933 & 0.077 & 0.104 & 0.091\\
k-NN & 0.886 & 0.864 & 0.903 & 0.871 & 0.867 & 0.897 & 0.097 & 0.129 & 0.136\\
SLR-K (\textbf{ours})& \bfseries\underline{0.934} & \bfseries0.954 & \bfseries0.908 & \bfseries\underline{0.918} & \bfseries\underline{0.944} & \bfseries0.934 & \bfseries0.093 & \bfseries\underline{0.065} & \bfseries0.046\\
SLR-Q (\textbf{ours})& \bfseries0.932 & \bfseries\underline{0.963} & \bfseries0.889 & \bfseries0.910 & \bfseries0.942 & \bfseries\underline{0.946} & \bfseries0.111 & \bfseries0.077 & \bfseries\underline{0.037}\\
\bottomrule
\end{tabular}

\end{table*}

\begin{table}[h!]%
\caption{Results of our proposed model on Heart Failure Clinical Records validation data.\label{t4}}
\centering
\begin{tabular}{cccccccccccc}
\toprule
Model & n-clusters & $\alpha$ & $\beta$ & Accuracy & F1 score & Precision score & Recall score  \\
& (level-quantiles) & & & & & \\
\midrule
SLR (\textbf{ours}) & 10 & 4.5 & 0.855 & \bfseries0.856 & \bfseries0.723 &  \bfseries0.810 &  \bfseries0.692 \\
(K-Means algorithm) & & & & & & \\
SLR (\textbf{ours})& 25 & 4.5 & 0.855 & 0.844 & 0.720 & 0.750 & 0.654  \\
(Q-Quantiles algorithm) & & & & & & \\
LR &-&-& - & 0.833 & 0.681 & 0.762 & 0.615  \\

\bottomrule
\end{tabular}
\end{table}

\begin{table}[h!]%
\caption{Results of our proposed model on Rice Osmancik Cammeo validation data.\label{t5}}
\centering
\begin{tabular}{cccccccccccc}
\toprule
Model & n-clusters & $\alpha$ & $\beta$ & Accuracy & F1 score & Precision score & Recall score  \\
 & (level-quantiles) & & & & & \\
\midrule
SLR (\textbf{ours}) & 12 & 0.3 & 0.62 & \bfseries0.938& \bfseries0.947 &  \bfseries0.944 &  \bfseries0.969 \\
(K-Means algorithm) & & & & & & \\
SLR (\textbf{ours}) & 31 & 4.5 & 0.855 & 0.937 & 0.946 & 0.935 & 0.960  \\
(Q-Quantiles algorithm) & & & & & & \\
LR &-&-& - & 0.933 & 0.942 & 0.937 & 0.948  \\

\bottomrule
\end{tabular}
\end{table}
\end{center}
\newpage

\section{Discussion}
Although numerous numerical approaches have been employed to solve the Original Logistic Regression issue, they were unable to eliminate the stochastic occurrences of data in practice. We may entirely explore a technique to change the chance constraints into the determined constraints by using the previously described chance constraints programming. The experiment's research results show that the effectiveness of the proposed model yields based on the evaluation metrics with the corporeal dataset.

We overcome the stochastic distribution of data that impacted the inability to occur with the acquired findings. Despite the extensive computations in the estimate procedure for the mean vectors or the covariance-variance matrix, the suggested approach outperforms contemporary machine learning algorithms.

To date, no techniques have addressed the solution of the logistic model by addressing the programming issue; instead, those studies have shown only a physical iterative formula that yields a proximal solution. With our novel concept, we were able to solve many of the different regression situations without knowing their solution formula.
\section{Conclusion and Future Work}

In this paper, we investigated the incorporation of stochastic elements into logistic regression through our proposed stochastic logistic regression model. To address the stochastic nature of the data, we employed the chance constraint introduced by Charnes and Cooper, treating each data point as a normal random variable. The key parameters governing the handling of the chance constraint are denoted as $\alpha$ and $\beta$, representing the acceptable range and probability. These parameters significantly influence the outcomes, as evident in the presented graphs. Despite some known asymptotic properties, selecting optimal values for these parameters remains a challenging task, especially considering the uniqueness of each dataset, leaving it as an open problem.


The importance of data scaling cannot be overstated when dealing with large-scale datasets, offering advantages such as computational speedup. Stochastic elements in the data are prevalent in real-world scenarios, either inherently or emerging post-scaling, as we treat groups of data points as new entities using methods like K-Means clustering or the Q-Quantiles algorithm discussed earlier.



The capability to navigate uncertainties and randomness provides a crucial advantage over deterministic methods. Through meticulous analysis and adjustments, the outcomes can be enhanced, making the approach applicable to a broader array of problems. We intend to enhance the proposed algorithm for solving the multiobjective variant of this problem in future endeavors. Numerous previous investigations have studied this multiobjective problem, such as optimizing a bicriteria convex programming problem's efficient set \cite{kim2013optimization} and implementing an outcome-based branch and bound algorithm \cite{thang2015outcome}. Furthermore, substantial contributions have been made to developing an outcome space algorithm for generalized multiplicative problems and optimization over the efficient set \cite{thang2016outcome}. Other related works include the research on optimizing over the Pareto set of semistrictly quasiconcave vector maximization and its application in stochastic portfolio selection \cite{vuong2023optimizing}, as well as the framework for controllable Pareto front learning with completed scalarization functions and its applications \cite{tuan2024framework}.



\bibliographystyle{unsrt}  
\bibliography{references} 

\end{document}